\newtheorem{theorem}{Theorem}
\newtheorem{definition}{Definition}
\newtheorem{corollary}{Corollary}
\theoremstyle{definition}
\newcommand{\RR}{\mathbb{R}}
\newcommand{\uq}{V^{(q)}}
\newcommand{\equi}{equiEPNN}
\newcommand{\domain}{\mathcal{V}_{\mathrm{simple}}^K}
\title{Spectral Graph Neural Networks are Incomplete on Graphs with a Simple Spectrum}
\author{%
   Snir Hordan\\
  Faculty of Mathematics\\
  Department of Applied Mathematics\\
  Technion - Israel Institute of Technology\\
  \And
   Maya Bechler-Speicher\\
Meta\\
  \And
   Gur Lifshitz\\
   Blavatnik School of Computer Science\\
    Tel-Aviv University\\
   \And
  Nadav Dym \\
Faculty of Mathematics\\
Faculty of Computer Science\\
  Technion - Israel Institute of Technology\\
 \\
}
\begin{document}

\maketitle

\begin{abstract}

Spectral features are widely incorporated within Graph Neural Networks (GNNs) to improve their expressive power, or their ability to distinguish among non-isomorphic graphs. One popular example is the usage of graph Laplacian eigenvectors for positional encoding in MPNNs and Graph Transformers. The expressive power of such Spectrally-enhanced GNNs (SGNNs) is usually evaluated via the $k$-WL graph isomorphism test hierarchy and homomorphism counting \cite{gai2025homomorphism, zhang2024expressive}. Yet, these frameworks align poorly with the graph spectra, yielding limited insight into SGNNs' expressive power. We leverage a well-studied paradigm of classifying graphs by their largest eigenvalue multiplicity \cite{10.1145/800070.802206} to introduce an expressivity hierarchy for SGNNs. We then prove that many SGNNs are incomplete even on graphs with distinct eigenvalues. To mitigate this deficiency, we adapt rotation equivariant neural networks to the graph spectra setting to propose a method to provably improve SGNNs' expressivity on simple spectrum graphs. We empirically verify our theoretical claims via an image classification experiment on the MNIST Superpixel dataset \cite{super-pixel} and eigenvector canonicalization on graphs from ZINC \cite{Irwin2012}. 


\end{abstract}

\section{Introduction}

Graph Neural Networks (GNNs) have become a ubiquitous paradigm for learning on graph-structured data. The core principle of GNNs is to maintain a representation of each graph vertex and leverage the graph structure to iteratively refine each representation by its vertex's graph neighborhood \cite{scarselli2008graph}. To enhance the purview of the vertex's neighborhood, it is common to incorporate spectral features, such as Random Walk matrices, positional encoding, and graph distances, into the refinement operation of  GNNs \cite{dwivedi2023benchmarking,ArnaizRodriguez2022, Velingker2023, zhang2023complete}. Such GNNs, which systematically incorporate spectral features within their representation refinement procedure, or Spectrally-enhanced GNNs (SGNNs)\cite{zhang2024expressive}, have gained significant traction in the graph learning community, due to their reasonable complexity and empirical benefits  \cite{huang2023stability, zhou2024stablegloballyexpressivegraph, zhang2024expressive, gai2025homomorphism}.

Understanding the expressive power of GNNs provides researchers with a framework for comparing different models and identifying their deficiencies, often leading to improvements \cite{pmlr-v235-hordan24a, maron2019provably, morris2019weisfeiler, Hordan_Amir_Gortler_Dym_2024, xu2018powerful}. These frameworks ought to characterize which graphs the GNN can distinguish among, based on the GNNs' inner workings. For instance, the Weisfeiler-Leman (WL) test, which maintains and refines vertex representations similarly to Message Passing Neural Networks, a subclass of GNNs, completely determines which graphs these models can distinguish among \cite{xu2018powerful}.

To study the expressive power of SGNNs, recent papers \cite{zhang2024expressive, gai2025homomorphism} proposed a spectrally enhanced GNN, called Eigenspace Projection GNN (EPNN), which generalizes many popular spectral graph neural networks, and analyze its expressivity via WL tests and homomorphism counting. This comparison is valuable in comparing the expressivity of SGNNs to that of their combinatorial GNN counterparts. Yet, this analysis does not yield insight into the role of the graph spectra in the distinguishing ability of these GNNs. 

To address this gap, we propose analyzing the expressive power of SGNNs via  Spectral Graph Theory, 
and in particular via the maximal eigenvalue multiplicity of a graph. As isomorphism of graphs with bounded eigenvalue multiplicity can be determined in polynomial time with the complexity depending on the bound \cite{10.1145/800070.802206}, this notion imposes a natural hierarchical classification of graphs, and SGNNs can potentially be complete on these graph classes, making this hierarchy a viable method for assessing their expressive power. 



Our analysis centers around the expressivity of EPNN on graphs with distinct eigenvalues. This model is at least as expressive as many commonly used SGNNs \cite{zhang2024expressive}, making the analysis generalizable to these models.  Surprisingly, we find that EPNN is incomplete even on the class of graphs with distinct eigenvalues. On the positive side, it achieves completeness on this class when the eigenvectors exhibit certain sparsity patterns. Based on these theoretical insights, we propose equiEPNN, inspired by equivariant neural networks for point clouds, which attains improved expressivity on graphs with distinct eigenvalues.


Our main contributions are as summarized follows:

\begin{enumerate}
    \item We prove the incompleteness of EPNN \ref{subsec:spec-inv-1wl} on graphs with a simple spectrum.

    \item We formulate a guarantee on the completeness of EPNN on graphs with a simple spectrum based on sparsity patterns of the eigenvectors.

    \item  We introduce equiEPNN \ref{sec:spec-equi-1-wl}, a modified EPNN variant, which integrates Euclidean message passing into the feature refinement procedure.

    \item We verify that EPNN is complete on MNIST-Superpixel, equiEPNN improves its expressivity when the eigendecomposition is truncated, and equiEPNN performs perfect eigenvector canonicalization on the ZINC dataset. 
\end{enumerate}


\begin{figure}
    \centering
    \begin{tikzpicture}[
        box/.style={
            rectangle,
            draw,
            rounded corners=2pt,
            minimum width=1.8cm,
            minimum height=0.8cm,
            text width=1.8cm,
            align=center,
            inner sep=3pt,
            font=\small
        }
    ]
        
        \node[box, fill=cyan!30] (friendly) {$1$-WL};
        \node[box, fill=orange!30, right=1.5cm of friendly] (wl) {EPNN};
        \node[box, fill=violet!30, right=1.5cm of wl] (canon) {equiEPNN};

        \draw[-Latex] (friendly) -- node[above, font=\scriptsize] {$\sqsupset$} (wl);
        \draw[-Latex] (wl) -- node[above, font=\scriptsize] {$\sqsupset$} (canon);

        \coordinate (arrow_start) at ([xshift=0cm, yshift=-1.2cm]friendly.south west);
        

    \end{tikzpicture}
        \caption{Hierarchy of $1$-WL test variants. The arrows with $\sqsupset$ indicate strict inclusion relationships, meaning each variant can distinguish all graphs that the previous one can, plus additional graphs. Standard $1$-WL is the least discriminative, while equiEPNN achieves the highest discriminative power by incorporating both spectral invariant and equivariant refinement.}
    \label{fig:wl-hierarchy}
\end{figure}
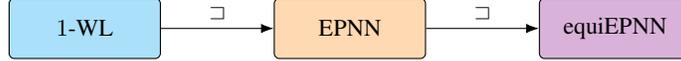
\section{Related work}
\label{sec:rel-wrk}


\subsection{Spectral invariant GNNs}
An enhancement to MPNNs and Transformer models is to incorporate spectral distances such as Random Walk, resistance, and shortest-path distances within the message passing operation \cite{li2020distance, zhang2023rethinking, mialon2021graphit, feldman2022weisfeiler}.
\citet{zhang2024expressive} have compared among spectral GNNs and the WL hierarchy by proving EPNN is strictly more powerful than $1$-WL yet strictly less powerful than $3$-WL. Despite the important result that  $3$-WL strictly bounds the expressive power of EPNN,  the large expressivity gap between $1$- and $3$-WL makes this determination difficult to conceptualize. Building on this work, \citet{gai2025homomorphism} have characterized the expressive power of EPNN via graph homomorphism counting, showing spectral invariant GNNs can homomorphism-count a class of specific tree-like graphs. Despite providing a deeper understanding of EPNN's expressive power, it remains hard to conceptualize and propose more expressive models based on it.


\subsection{Spectral canonicalization methods}

The eigenvectors of a graph are used as positional encoding to improve the expressive power of message-passing and as positional encoding for Transformer \cite{rampasek2022recipe, kreuzer2021rethinking, ma2023graph} based models. Yet, positional encoding has an inherent ambiguity problem. An eigenvector corresponding to a unique eigenvalue can be represented as itself or its negation \cite{spielman2012spectral}. Canonicalization methods \cite{canonicalization-perspective, ma2023laplacian} are used to address the ambiguity problems of eigenvectors, by choosing a unique representative for each eigenvector.

\citet{ma2023laplacian} have uncovered an inherent limitation of canonicalization methods that process each eigenspace separately, which is that they cannot canonicalize eigenvectors with nontrivial self-symmetries. These models process each eigenbasis independently to obtain an orthogonal invariant and permutation-equivariant feature, and then use these features for downstream applications. Notable examples include SignNet and BasisNet \cite{lim2023sign}, MAP \cite{ma2023laplacian} and OAP \cite{canonicalization-perspective}. \citet{canonicalization-perspective} have shown that these methods lose information when canonicalizing eigenvectors with self-symmetries, proving that the popular spectral invariant models SignNet and BasisNet are incomplete. In section \ref{subsec:exp-eigvec-canon}, we provide a canonicalization scheme that bypasses this issue, and while not provable complete on all eigenvectors, empirically it canonicalizes all eigenvectors corresponding to distinct eigenvalues, in the ZINC \cite{Irwin2012} dataset.

\subsection{Expressivity on simple spectrum graphs}
An early study on the connection between GNNs and spectral features of the underlying graph studied the expressive power of CGNs \cite{wang2022powerful}. They have proven that linear graph convolutional neural networks (GCNs) can map a graph signal to any chosen target vector, if the graph has distinct eigenvalues. Yet, this graph signal is sampled randomly and thus is not equivariant to permutations of the graph nodes, which may lead to degraded generalization, see \citet{inv-node}.

\section{Problem statement}
\label{sec:dets}
\subsection{Spectral graph decomposition}
Graphs are typically represented by a matrix $A\in \RR^{n\times n}$, where the $(i,j)$ entry of the matrix encodes the relationship between node $i$ and $j$. This matrix could be the adjacency matrix, the normalized or un-normalized graph Laplacian, or a distance or Gram Matrix where the graph nodes have some underlying geometry. 

An important principle in the design of graph neural networks is the notion of permutation invariance: since graph nodes do not come with an intrinsic order, we would like to think of a matrix $A$ and its conjugation $PAP^T $ by a permutation matrix $P\in S_n$, as being equivalent. Graph neural networks respect this invariance constraint and produce permutation invariant function $f$ satisfying $f(A)=f(PAP^T)$. One popular method to design these functions exploits the eigendecomposition of the matrix $A$. 

For the purpose of discussion, we assume that $A$ has an eigenbasis $v^{(1)},\ldots,v^{(n)} $ of vectors of norm one,  which corresponds to real eigenvalues $\lambda^{(1)},\ldots,\lambda^{(n)}$. This gives us an alternative representation of the matrix $A$ with its own interesting symmetries. Firstly, we note that
each vector $Pv^{(q)}$ will be an eigenvector of $PAP^T $ with the same eigenvalue $\lambda^{(q)}$. Secondly, if $v^{(q)}$ is an eigenvector of norm one, then so is $-v^{(q)}$. When the eigenvalues of $f$ are pairwise-distinct, then these are all the relevant ambiguities. This is called the simple spectrum case.  In the case of an eigenbasis of dimension $k$, the eigendecomposition ambiguity is defined by orthogonal transformations in $O_{k}$. In this paper, we will focus on the simple spectrum case. In this case, we define sign-invariant functions as follows

\begin{definition}[Sign Invariant functions]\label{def:invariant}
Denote 
$$\domain=\{(V,\vec{\lambda})\in \RR^{n\times K}\oplus \RR^K| \quad  \lambda_1>\lambda_2>\ldots>\lambda_K \}. $$ 
We say that $F:\domain \to \RR^m $ is \emph{sign invariant} if 
$$F(V,\vec{\lambda})=F(PVS,\vec{\lambda}), \quad \forall P\in S_n, \quad S\in \{-1,1\}^K $$
\end{definition}
We note that in this definition, $V$ represents a $n\times K$ matrix whose $K$ columns are the eigenvectors $v^{(1)},\ldots,v^{(q)} $, and the notation $S\in \{-1,1\}^K $ means that $S$ is a diagonal matrix whose diagonal is a vector in $\{-1,1\}^K $. 

The notion of sign invariant function was first introduced in \cite{lim2023sign}, and was later discussed in \cite{ma2023laplacian, canonicalization-perspective}. These papers discuss a collection  of parametric functions $\mathcal{F}=\{f_\theta(V,\vec{\lambda})| \quad \theta \in \Theta\}$, such that for all parameters $\theta$ the function $f_\theta$ is sign invariant. To understand the expressiveness of these models, we formally define the notion of completeness on simple spectrum graphs.

\begin{definition}[Sign Invariant Separation]
For $K\leq n$, let $\mathcal{F}$ denote a collection of sign invariant functions defined on $\domain$, and let $\mathcal{D}$ be a subset of  $\domain$. We say that  $\mathcal{F}$ is \textbf{complete} on  $\mathcal{D}$ if for any pair  $(V,\vec{\lambda})$ and $(U,\vec{\eta})$ in   $\mathcal{D}$, there exists a function $f \in \mathcal{F}$ such that 
    $$
 f(V,\vec{\lambda})\neq f(U,\vec{\eta}).
    $$
\end{definition}

Ideally, we would like $\mathcal{F}$ to be complete on all of the domain $\domain$. If $\mathcal{F}$ is complete, then by applying it to eigendecompostions of graphs with simple spectrum, we will obtain models which can separate all graphs with simple spectrum, up to permutation equivalence. The goal of this paper is to understand whether existing sign-invariant functions are complete.

\subsection{EPNN}\label{subsec:spec-inv-1wl}
We will focus on a large family of sign invariant functions named \emph{Eigenspace Projection GNNs (EPNN)}. This family of functions, introduced in \citet{zhang2024expressive}, was shown to generalize many spectral invariant methods such as Random Walk, resistance, and shortest-path distances \cite{li2020distance, zhang2023rethinking, mialon2021graphit, feldman2022weisfeiler}.  This method is based on a message passing like mechanism, where the spectral information is encoded by using the projection onto eigenspaces as edge features. In the simple spectrum case, this method can be formulated as follows:

For a given eigendecomposition $(V,\vec{\lambda})\in \domain $, we  we initialize a coloring for each `node' $i\in [n]$ by
\begin{equation}\label{eq:init}
h_i^{(0)} =V_i\odot V_i ,
\end{equation}
where $V_i$ is the $K$ dimensional vector $[V_i^{(1)},\ldots,V_i^{(k)}] $ obtained by sampling all eigenvectors at the $i$-th node, and $\odot$ denotes elementwise multiplication. Importantly, this initialization is sign-invariant: while the global sign of each eigenvector is ambiguous, the product of two elements of the same eigenvector is not. 

We next iteratively refine the node features via the update rule:
\begin{equation}\label{eq:wl-update-inv}
    h_i^{(t+1)} = \mathrm{UPDATE}_{(t)}\Big(h_i^{(t)},\vec{\lambda},\big\{(h_j^{(t)}, V_i \odot V_j )\mid j=1,\ldots,n\big\}\Big)
\end{equation}

Finally, we apply a global pooling operation to obtain a final permutation invariant representation
\begin{equation}\label{eq:readout}
h_{\mathrm{global}} = \mathrm{READOUT}(\{h_i^{(T)} \mid i=1,\ldots, n\})
\end{equation}
Once $\mathrm{UPDATE}_{(t)}$ and READOUT functions are determined, this procedure determines a function $f(V,\vec{\lambda})=h_{\mathrm{global}} $ which is sign-invariant as in Definition \ref{def:invariant}. The collection of all such functions obtained by all possible choices of $\mathrm{UPDATE}_{(t)}$ and READOUT functions is denoted by $\mathcal{F}_{\text{EPNN}} $.

\subsection{Equivariant EPNN}\label{sec:spec-equi-1-wl}

In \cite{gai2025homomorphism}, the authors suggest methods based on higher order WL tests to boost the expressive power of spectral message passing neural networks. The complexity of these methods is considerably higher than EPNN. In contrast, we will now suggest a method for increasing the expressive power of EPNN without significantly changing model complexity. 

Our suggestions are based on constructions from neural networks for geometric point clouds. These neural networks operate on point clouds $X \in \RR^{n\times d} $, where in many applications $d=3$ and each of the $n$ points in $\RR^3$ represents a geometric coordinate. Models for such data are required to be invariant (or equivariant) to both permutations in $S_n$ and rotations in $O(d)$. This equivariant structure is similar to, but not identical to, the situation we have for graph eigecomposition: under the simple spectrum assumption, the symmetry transformations we are interested in is a single  global permutation, and $K$ sign changes, which are rotations in $O(1)^K$. In the more general setting, we will have a single permutation and multiple rotations, whose dimension is determined by the multiplicity of each eigenvalue. 

Via this analogy, we can look at spectral models for graphs from the perspective of point cloud networks. From this perspective, EPNN resembles geometric invariant networks, such as Schnet \cite{schutt2017schnet}, which are based on simple invariant features. In contrast, \cite{joshi2023expressive} and \cite{sverdlov2024expressive} showed that, at least for point clouds, expressivity can be increased by recursively updating a rotation equivariant (in our scenario, sign equivariant) feature $v_i^{(t)} $ in parallel with the invariant feature $h_i^{(t)}$. Inspired by these observations, we suggest the following sign equivariant feature refinement procedure:

We use the same initialization $h_i^{(0)} $ as in Equation \ref{eq:init}, and we initialize the equivariant feature $v_i^{(0)} $ to $v_i^{(0)}=V_i$. We then iteratively update these two features via

\begin{align*}
    h_i^{(t+1)} &= \mathrm{UPDATE}_{(t,1)}\Big(h_i^{(t)},\vec{\lambda},\big\{(h_j^{(t)}, v_i^{(t)}\odot v_j^{(t)} )\mid j=1,\ldots,n\big\}\Big)\\
     v_i^{(t+1)} &= v_i^{(t)}+\sum_{j=1}^n  v_j^{(t)} \odot \mathrm{UPDATE}_{(t,2)}(h_i^{(t)}, h_j^{(t)},v_i^{(t)}\odot v_j^{(t)})
\end{align*}
where $\mathrm{UPDATE}_{(t,1)} $ is a multiset function, and $\mathrm{UPDATE}_{(t,2)}$ maps its input to $\RR^K $ so that the elementwise product in the equation above is well defined. 

After running this procedure for $T$ iterations, we obtain an invariant global feature $h_{\mathrm{global}} $ by aggregating the invariant node features $h_i^{(T)}$ using a READOUT function, as in \eqref{eq:readout}. This gives us a sign invariant function $f(V,\vec{\lambda})=h_{\mathrm{global}} $. We name the class of all functions obtained by running this procedure with all different choices of update and readout functions \equi. 


We note that we can obtain EPNN models by setting $\mathrm{UPDATE}_{(t,2)} $ to be the constant mapping to the zero vector. Accordingly, $\mathcal{F}_{\mathrm{equi}} $ is at least as expressive as EPNN. In Section \ref{seubsec:equi-more-express} we will show that it is strictly more expressive.

\section{On the incompleteness of spectral graph neural networks}\label{sec:on-the-incopm}

In this section, we analyze the expressive power of EPNN and equiEPNN on graphs with a simple spectrum. We first provide a counterexample to prove its incompleteness of EPNN on simple spectrum graphs.  We then show that an equiEPNN can separate the counterexample, thus proving it is strictly more expressive than EPNN. Next we provide a subset of $\domain$ on which EPNN is complete. Finally, we discuss how our results imply the incompleteness of popular spectral GNNs even in the simple spectrum case. 

\subsection{EPNN is incomplete} \label{subsec:siwl-incomp}
We first introduce a pair of non-isomorphic eigendecompositions, $(V,\vec{\lambda})  $ and $(U,\vec{\lambda})  $ in $\domain$, which EPNN cannot distinguish, that is, it assigns them the same final feature after any number of refinement steps. In this construction $n=12,K=6$, and we fix the same choice of distinct eigenvalues $\vec{\lambda}$ for both examples. To define $V,U$, we first denote the four elements of the abelian group $\{-1,1\}^2$ by
$$z_0=\begin{pmatrix}
1\\
1	
\end{pmatrix} 
, \quad
z_1=\begin{pmatrix}
	-1\\
	1	
\end{pmatrix}
,\quad
z_2=\begin{pmatrix}
	1\\
	-1	
\end{pmatrix}
,\quad
z_3=\begin{pmatrix}
	-1\\
	-1	
\end{pmatrix}
,\quad
0_2=\begin{pmatrix}
0\\
0	
\end{pmatrix}.
$$
Using these, we define $U,V $ via

\setcounter{MaxMatrixCols}{20}
$$U^T=
\begin{pmatrix}
z_0 & z_1 & z_2 & z_3 & 0_2 & 0_2 &0_2 & 0_2 &  z_0 & z_1 & z_2 &z_3\\
z_0 & z_1 & z_2 & z_3 & z_0 & z_1 &z_2 & z_3 &  0_2 & 0_2 & 0_2 & 0_2\\
0_2 & 0_2 & 0_2 & 0_2 & z_0 & z_1 &z_3 & z_2 & z_0 & z_2 & z_1 & z_3

\end{pmatrix}
$$ 

$$V^T=
\begin{pmatrix}
z_0 & z_1 & z_2 & z_3 &0_2  & 0_2 &0_2 & 0_2 &   z_0 & z_1 & z_2 &z_3\\
z_0 & z_1 & z_2 & z_3 &z_0  & z_1 &z_2 & z_3 &   0_2 & 0_2 & 0_2 & 0_2\\
0_2 & 0_2 & 0_2 & 0_2 &z_1  & z_0 &z_2 & z_3 &  z_2 & z_0 & z_3 & z_1
\end{pmatrix}
$$ 
We now show that $U,V$ are not isomorphic and cannot be separated by EPNN:

\begin{restatable}{theorem}{thmtwo}
    (Incompleteness of EPNN) 
The following statements hold:
\begin{enumerate}
    \item $U$ and $V$ are not isomorphic under the group action of $\{-1,1\}^{6}\times S_{12}$.
    \item EPNN cannot separate $U$ and $V$ after any number of iterations.
        \item $U$ and $V$ have no non-trivial automorphisms.
\end{enumerate}
Therefore, EPNN is incomplete on simple spectrum graphs. 
     
\end{restatable}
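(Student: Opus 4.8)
The plan is to reduce all three statements to finite computations in the Klein four-group $G=\{z_0,z_1,z_2,z_3\}\cong(\mathbb{Z}_2)^2$, using the single structural fact that $z_a\odot z_b=z_{a\oplus b}$. The twelve nodes split into three \emph{types} according to which pair of eigenvector coordinates vanishes on them — the supports $\{1,2,3,4\}$, $\{3,4,5,6\}$ and $\{1,2,5,6\}$ — and within each type the four nodes are naturally indexed by $G$ via the common $z$-pattern on one of their two nonzero coordinate blocks. Since $U$ and $V$ agree on two of the three coordinate blocks, the third (``twisted'') block is the only place the two examples differ, and all three claims come down to analysing the effect of this twist. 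Throughout, $\hat\beta_1,\hat\beta_2,\hat\beta_3\in G$ will denote the group elements named by the three coordinate pairs of a sign matrix $S\in\{-1,1\}^6$.

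For statement~1, suppose $V=PUS$ with $P\in S_{12}$. Sign changes do not move zeros, so $P$ must match nodes of equal support; as the three supports are distinct, $P$ preserves each type and hence induces a permutation of $G$ on each. On every \emph{untwisted} block (blocks $1,2$ on the first type, block $2$ on the second, block $1$ on the third) the pattern is in ``standard form'' $x\mapsto z_x$ after relabelling, which forces each induced permutation to be a translation $x\mapsto x\oplus\hat\beta$ of $G$; comparing the two blocks a single type sees also forces these $\hat\beta$'s to coincide across the three types and forces $\hat\beta_1=\hat\beta_2$. Substituting into the \emph{twisted} block yields two affine identities over $\mathbb{F}_2$ relating $\hat\beta$ to $\hat\beta_3$; subtracting them leaves one linear equation in $\hat\beta$ whose right-hand side records the mismatch between the two halves of the twist, and one checks directly that it has no solution. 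Making this last equation unsolvable is exactly where the construction's cleverness sits, so that finite verification is the main work of this part.

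For statement~2 — which I view as the conceptual core — the plan is to show that the $3$-colouring of nodes by type is a \emph{common stable colouring} of the EPNN refinement on $U$ and on $V$. First, $h_i^{(0)}=V_i\odot V_i$ is just the $0/1$ indicator of node $i$'s support, hence type-constant and the same for $U$ and $V$. The key lemma is that for every ordered pair of types $(\tau,\tau')$ and every node $i$ of type $\tau$, the multiset $\{\,U_i\odot U_j : j\ \text{of type}\ \tau'\,\}$ is independent of the choice of $i\in\tau$ and equals the corresponding multiset for $V$. Granting the lemma, an easy induction on $t$ shows each $h_i^{(t)}$ is type-constant and identical for $U$ and $V$ — the update at $i$ only sees its own colour, $\vec\lambda$, and the edge-feature multiset grouped by the colour of the other endpoint — so READOUT of the resulting type-constant multiset (four nodes of each colour) agrees for $U$ and $V$, proving EPNN cannot separate them. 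The lemma is checked one type-pair at a time: only the block(s) that $\tau$ and $\tau'$ share carry a nonzero edge feature, and on such a block a type's features are $z_{Lx+c}$ for $x$ ranging over $G$, with $L$ a fixed $\mathbb{F}_2$-linear map and $c$ the twist; forming $U_i\odot U_j=z_{(Lx_i+c)\oplus(Lx_j+c)}=z_{L(x_i\oplus x_j)}$, the twist $c$ \emph{cancels}, so as $j$ ranges over the type the multiset is a fixed image coset, independent of $i$ and of whether we use $U$'s twist or $V$'s. That EPNN only ever sees such products, while the twist distinguishing $U$ from $V$ lives precisely in the cancelled part, is the reason for incompleteness; verifying the multiset identity simultaneously for all six ordered type-pairs is the main obstacle.

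Finally, statement~3 reruns the bookkeeping of statement~1 with the homogeneous equations $U=PUS$ (and separately $V=PVS$): the untwisted blocks again force each induced permutation of $G$ to be translation by a common $\hat\beta$ with $\hat\beta_1=\hat\beta_2$, and the twisted block now gives two identities that combine to a homogeneous $\mathbb{F}_2$-linear equation in $\hat\beta$, which a direct check shows has only the solution $\hat\beta=0$; hence all induced permutations and all sign pairs are trivial, i.e.\ $(P,S)=(\mathrm{id},I)$. Putting it together, statements~1 and~2 exhibit a pair $(U,\vec\lambda)\neq(V,\vec\lambda)$ in $\domain$ that no function in $\mathcal{F}_{\mathrm{EPNN}}$ separates, so EPNN is not complete on simple spectrum graphs, and statement~3 certifies in addition that the counterexample is rigid. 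The only genuinely delicate point is the cancellation phenomenon underlying statement~2; statements~1 and~3, once the ``reference block'' reductions are in place, are routine finite $\mathbb{F}_2$ linear algebra.
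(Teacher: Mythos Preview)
Your plan mirrors the paper's closely: the same three-type partition by support, the same Klein four-group bookkeeping, and the same ``twist cancels in $V_i\odot V_j$'' mechanism for statement~2. Your $\mathbb{F}_2$-affine framing is a bit more systematic than the paper's prose, but the content is identical.

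There is, however, a genuine gap in statement~1 --- and it is one the paper's own proof shares. You write that after subtracting the two block-3 identities ``one checks directly that it has no solution.'' If you actually run your recipe: indexing each type by its standard reference block, the block-3 pattern on type~2 is $x\mapsto L_2x$ in $U$ and $x\mapsto L_2x+z_1$ in $V$ with $L_2=\bigl(\begin{smallmatrix}1&1\\0&1\end{smallmatrix}\bigr)$, while on type~3 it is $x\mapsto L_3x$ in $U$ and $x\mapsto L_3x+z_2$ in $V$ with $L_3=\bigl(\begin{smallmatrix}0&1\\1&0\end{smallmatrix}\bigr)$. The two constraints are thus $\hat\beta_3=z_1+L_2\hat\beta$ and $\hat\beta_3=z_2+L_3\hat\beta$, and subtracting gives $(L_2+L_3)\hat\beta=z_3$. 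But $L_2+L_3=\bigl(\begin{smallmatrix}1&0\\1&1\end{smallmatrix}\bigr)$ is \emph{invertible} over $\mathbb{F}_2$, so a solution exists: $\hat\beta=z_1$, hence $\hat\beta_3=z_0$. Concretely, $P=(1\,2)(3\,4)(5\,6)(7\,8)(9\,10)(11\,12)$ together with $S=\mathrm{diag}(z_1,z_1,z_0)$ satisfies $V=PUS$ entry by entry, so the displayed $U$ and $V$ are in fact isomorphic and statement~1 is false for this pair. The paper's argument makes the same slip by writing the type-2 constraint as ``$\hat\beta_3=z_1\cdot z$'' rather than $z_1+L_2 z$; these agree only for $z\in\{z_0,z_1\}$.

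Your arguments for statements~2 and~3 are sound --- indeed your framework makes statement~3 transparent, since the homogeneous version $(L_2+L_3)\hat\beta=0$ forces $\hat\beta=0$ by the same invertibility --- and the cancellation lemma behind statement~2 is correct and is exactly the paper's mechanism. But with this specific $U,V$, statement~2 is vacuous (isomorphic inputs are never separated by a sign-invariant function), so ``EPNN is incomplete on simple spectrum graphs'' does not follow. The construction needs to be modified before either your proof or the paper's can go through; with the present block-3 linear parts, \emph{any} pair of twists $c_2,c_3$ yields isomorphic $U,V$, since $L_2+L_3$ is onto.
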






	


\begin{proof}[Proof Idea]
To show $U,V$ are not isomorphic, we note that for any pair of permutation-sign matrices taking $U$ to $V$, the first four columns of $U^T$ must be mapped the first four columns of $V^T$. The same is true for columns $5-8$ and $9-12$. Considering the first four columns, we see that any sign matrix mapping them from $U$ to $V$ will be of the form $\mathrm{diag}(z,z,z') $ for $z,z' \in \{-1,+1\}^2$. The same argument for columns $5-8$ and $9-12$ gives  sign patterns of the form  $\mathrm{diag}(z',z,z_1\cdot z) $ and  $\mathrm{diag}(z,z',z_2\cdot z) $, respectively. But there is no sign pattern satisfying these three constraints simultaneously. 

 We now explain the lack of separation of EPNN. 
 We refer to the multiset of the multiplications of a column $i$ with all the other columns, as the column $i$'s purview. In the initial step, the purview of each column in the first $4$-column block in $V^T$ and  $U^T$, is identical, as the first $4$ columns exhibit a group structure with the multiplication operation. Thus, the hidden states of the first $4$ indices of $U^T$ and $V^T$ will be identical. By similar arguments, this holds for the remaining two blocks. Thus, after a refinement step, the nodes in each block cannot distinguish among those from other blocks,  both in  $U^T$ and $V^T$. Therefore, additional refinement procedures maintain identical representations for members of each index `block' and corresponding blocks in $U^T$ and $V^T$. This implies EPNN cannot separate $U$ and $V$.
 
 A full proof of the theorem is provided in the Appendix. 
\end{proof}
We further prove that even Equivariant EPNN is incomplete over simple spectrum graphs. Or formally,

\begin{restatable}{theorem}{thmthree}
    (Incompleteness of Equivariant EPNN) 
There exist $X, Y \in \RR^{6 \times 16}$ such that the following statements hold:\begin{enumerate}
    \item $X$ and $Y$ are not isomorphic under the group action of $\{-1,1\}^{6}\times S_{16}$.
    \item Equivariant EPNN cannot separate $X$ and $Y$ after any number of iterations.
\end{enumerate}
Therefore, Equivariant EPNN is incomplete on simple spectrum graphs. 
\end{restatable}

\textbf{Remark:} In many cases we are interested in eigenvalue decompositions of symmetric matrices, in which case the columns of $V,U$ (the rows of $V^T,U^T$) should be orthonormal. While our $V,U$ do not satisfy this condition, in the Appendix we show how they can be enlarged to give an enlarged counterexample which has the same properties, and does have orthonormal columns. 

\subsection{When is EPNN complete?}

The counterexample proves that there is an inherent limit to the expressive power of contemporary spectral invariant networks. We note that in this example $U,V$ had a significant number of zero entries. We now show that when $U,V$ have less than $n$ zeros, EPNN will be complete:

\begin{theorem}[EPNN Can Distinguish Dense Graphs with Distinct Eigenvalues]\label{thm:2}

Let $\mathcal{D} \subseteq \domain$ denote the set of $(V,\vec{\lambda}) $ where $V$ has strictly less than $n$ zero-valued entries. Then EPNN is complete on $\mathcal{D}$ .
\end{theorem}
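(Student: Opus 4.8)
The plan is to show that EPNN's initial colouring plus one or two rounds of refinement already recovers, up to the sign/permutation symmetry, the full matrix $V$ (equivalently, the Gram-type data $\{V_i\odot V_j\}_{i,j}$ together with the diagonal $\{V_i\odot V_i\}_i$), whenever $V$ has fewer than $n$ zero entries; once that data is recovered, completeness is immediate because two eigendecompositions inducing the same such data must be related by $S_n\times\{-1,1\}^K$. Recall that the node $i$ gets initial colour $h_i^{(0)}=V_i\odot V_i$, and after aggregating the edge features $V_i\odot V_j$ over all $j$, EPNN has access (via an injective multiset READOUT/UPDATE, which is available since we only need the existence of \emph{some} choice of update functions) to the multiset $\{(h_j^{(0)}, V_i\odot V_j): j\in[n]\}$ at each node, and then to the multiset of these over all $i$. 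So the real content is a linear-algebra/combinatorial statement: \emph{the collection of signed entrywise products, as an unordered-up-to-symmetry object, is reconstructible from this EPNN transcript precisely when there are fewer than $n$ zeros.}

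The key steps, in order. First, fix a column (eigenvector) index $q\in[K]$ and look at the $q$-th coordinate only: EPNN's data restricted to coordinate $q$ gives the multiset $\{V_i^{(q)}V_j^{(q)}\}_{i,j}$ and the diagonal squares $\{(V_i^{(q)})^2\}_i$. I would argue that from the multiset of pairwise products of the real numbers $a_1,\dots,a_n$ (the entries of column $q$), together with knowledge of which indices are attached to which node, one recovers $a_1,\dots,a_n$ up to a single global sign — \emph{provided at most ... zeros among the $a_i$}. The delicate point is that if many $a_i$ vanish, the products $a_ia_j$ lose the cross-information needed to tie together signs of different coordinates/nodes; a single isolated zero in a column is harmless, but a zero in node $i$ across enough coordinates makes node $i$ "invisible" to the sign-coupling. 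Second, I would do the sign-coupling \emph{across} coordinates: having fixed per-column signs up to one bit each, the product data $V_i\odot V_j$ (a vector in $\RR^K$, not coordinatewise-separated) ties the $K$ sign bits together through any node $i$ that has two nonzero coordinates, and ties nodes together through any coordinate that is nonzero at two nodes. Third, I would package this as a connectivity argument: build a bipartite-type incidence structure between nodes and coordinates with an edge when $V_i^{(q)}\neq 0$; the hypothesis "strictly fewer than $n$ zeros" means strictly more than $nK-n=n(K-1)$ incidences, i.e. the "non-incidence" set has size $<n$, which I claim forces the relevant graph to be connected (or to have all its sign ambiguities resolvable), so that all $K$ global sign bits and the global permutation get pinned down simultaneously, yielding $V$ up to $S_n\times\{-1,1\}^K$. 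Finally, feed this reconstruction into the definition of completeness: if $(V,\vec\lambda)$ and $(U,\vec\eta)$ give identical EPNN outputs for all choices of update/readout, then in particular $\vec\lambda=\vec\eta$ (passed through untouched) and the reconstructed data agree, so $(U,\vec\eta)=(PVS,\vec\lambda)$ for some $P,S$, contradicting non-isomorphism.

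The main obstacle I expect is making the counting "$<n$ zeros $\Rightarrow$ connectivity of the node–coordinate incidence structure" precise and tight, and in particular correctly handling the degenerate cases: an all-zero column (which can't happen here since eigenvectors have norm one, but the abstract $V$ in $\domain$ need not), a node that is zero in all but one coordinate, and the interaction of the global per-column sign bits with possible nontrivial automorphisms. The cleanest route is probably: (i) show each column is determined up to a global sign from the products within that column (an easy fact about multisets of pairwise products of reals once the number of zeros is controlled); (ii) show the per-column sign bits are jointly determined by examining, for each node $i$, the sign pattern of the vector $V_i\odot V_i$ versus $V_i\odot V_j$ — this links coordinate $q$ and coordinate $r$ whenever some node is nonzero in both; (iii) reduce "fewer than $n$ total zeros" to "the coordinates cannot be split into two groups with no common nonzero node," i.e. the relevant hypergraph on $[K]$ is connected. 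I expect step (iii)'s counting to be the crux, and I would look for the extremal configuration (how to place exactly $n$ zeros to disconnect things) to confirm the bound $<n$ is sharp, consistent with the $n=12$ counterexample of the previous theorem having exactly $n=12$ zeros per pair.
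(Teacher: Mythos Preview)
You have the right high-level strategy (reconstruct $V$ up to $S_n\times\{-1,1\}^K$ from the EPNN transcript), but you have missed the single observation that makes the proof trivial, and your workaround route has a real gap.

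The paper's proof is a one-line pigeonhole: $V$ has $n$ rows and strictly fewer than $n$ zero entries, so some row $V_i$ has \emph{no} zeros. That row alone finishes the argument: after one EPNN step, node $i$ carries $V_i\odot V_i$ (hence $|V_i|$ coordinatewise) and the multiset $\{V_i\odot V_j:j\in[n]\}$; fixing all signs of $V_i$ positive and dividing elementwise recovers the multiset $\{V_j:j\}$, i.e.\ $V$ up to row permutation and the chosen column signs. No per-column analysis, no connectivity, no merging.

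Your route instead extracts each column up to a sign, then tries to couple the sign bits through a connectivity argument on the coordinate hypergraph. The connectivity claim itself is fine (if $[K]=A\sqcup B$ with no node nonzero in both parts, each node contributes at least one zero, giving $\ge n$ zeros), but it is strictly weaker than what the pigeonhole actually yields---namely that some node is nonzero in \emph{all} coordinates---and it is this stronger fact that you need. The genuine gap is in your merging step: from anchor node $i$ you recover $V$ restricted to the columns where $V_i\ne 0$, as a multiset of rows; from another anchor $i'$ you get a different restriction. To glue these you must match rows across the two multisets using only the overlap columns, and this matching is ambiguous whenever two rows of $V$ agree on that overlap. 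You do not address this, and it is not clear further EPNN iterations resolve it in general. The pigeonhole sidesteps the issue entirely by producing a single anchor with empty ``missing'' set.

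A minor factual point: the counterexample matrices $U,V$ have $24$ zero entries each (twelve $0_2$ blocks), not $n=12$, so they do not witness sharpness of the bound in the way you suggest.
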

\begin{proof}
    From the pigeonhole principle, an index exists where the $i$-th row of $V$ is has no zeros. The hidden state $h_i^{(1)}$ after a signal iteration of EPNN (see \eqref{eq:wl-update-inv}) can encode the eigenvalues $\vec{\lambda} $, the squared values of each coordinate of $V_i$,  and the multiset  
    
 \begin{equation}\label{eq:achimedean}
    h_i^{(1)} = ( V_i \odot V_i ,  \big\{ V_i \odot V_j \mid j=1,\ldots,n\big\})
\end{equation}
To recover $V$ from $h_i^{(1)}$ up to symmetries, we can fix the sign ambiguity by choosing all coordinates of $V_i$ to be positive. We can then recover the remaining $V_j $ from the multiset in Equation \ref{eq:achimedean}.




\end{proof}

This uncovers the inner workings of EPNN in processing simple spectrum graphs. Essentially, each entry can be normalized to represent a group element in $O(1)$, which acts as a local frame of reference, see \cite{DymLS24} for more background, allowing us to reconstruct the eigenvectors up to sign symmetries.

\subsection{Unique node identification via EPNN}
A well-known mechanism for circumventing the limited expressive power of GNNs is by injecting unique node identifiers (IDs), which break the symmetries that hinder GNNs' separation ability \cite{loukas2020graphneuralnetworkslearn, cyclesgnns}. Popular approaches include random node initialization \cite{inv-node} and combinatorial methods \cite{dong2024rethinking}, yet they are either limited by their discontinuity or break permutation equivariance. A natural question is whether the node features from EPNN are unique after finitely many iterations? If so, we have attained node IDs that do not break equivariance and change continuously with the eigendecomposition, alleviating the deficiencies of widely-used methods. We answer this question in the affirmative, provided the eigenvectors adhere to a sparsity pattern. 

\begin{restatable}{theorem}{uni}
    (EPNN for Unique Node Identifiers)    
Let $\mathcal{D} \subseteq \domain$ denote the set of $(V,\vec{\lambda}) $ where $V$ has no automorphisms, and has at most one zero per eigenvector. Then, one iteration of EPNN with injective UPDATE and READOUT functions assigns a unique identifier to each hidden node feature.
\end{restatable}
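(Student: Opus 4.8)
The plan is to exploit the injectivity of UPDATE and READOUT so that the hidden feature $h_i^{(1)}$ is, up to a relabeling, exactly the tuple
$$\bigl(\vec{\lambda},\; V_i\odot V_i,\; \{(V_j\odot V_j,\; V_i\odot V_j)\mid j=1,\ldots,n\}\bigr),$$
and then argue by contradiction. Suppose $h_i^{(1)}=h_{i'}^{(1)}$ for some $i\neq i'$; I will manufacture a non-trivial automorphism of $V$, contradicting the hypothesis. Comparing the degree-$2$ (initialization) components gives $V_{i,q}^2=V_{i',q}^2$, i.e. $|V_{i,q}|=|V_{i',q}|$ for every $q\in[K]$. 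Comparing the multiset components yields a bijection $\sigma$ of $[n]$ with, for all $j$: (a) $V_{\sigma(j)}\odot V_{\sigma(j)}=V_j\odot V_j$, and (b) $V_{i'}\odot V_{\sigma(j)}=V_i\odot V_j$.

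Next I would build a single sign vector $s\in\{\pm1\}^K$ witnessing that $\sigma$ is a symmetry of $V$. Fix a coordinate $q$. I first observe $V_{i,q}\neq 0$: otherwise $V_{i',q}=0$ as well by the equality of absolute values, so both $i$ and $i'$ would be zero-indices of the eigenvector $v^{(q)}$, which has at most one zero, forcing $i=i'$. Hence $V_{i,q}$ and $V_{i',q}$ are both nonzero with the same absolute value, and I set $s_q:=V_{i',q}/V_{i,q}\in\{\pm1\}$. Reading (b) in coordinate $q$: for every $j$ with $V_{j,q}\neq0$ we get $V_{\sigma(j),q}=s_q V_{j,q}$, while for the at most one index $j$ with $V_{j,q}=0$, property (a) gives $V_{\sigma(j),q}=0=s_q V_{j,q}$ anyway. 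Thus $V_{\sigma(j)}=V_j\odot s$ for all $j$, i.e. $(P_\sigma,\mathrm{diag}(s))$ is an automorphism of $V$.

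Finally I would invoke the no-automorphism assumption: $(P_\sigma,\mathrm{diag}(s))$ must be the identity, so $\sigma=\mathrm{id}$ and $s=\mathbf 1$. Substituting $\sigma=\mathrm{id}$ into (b) with $j=i$ gives $V_{i'}\odot V_i=V_i\odot V_i$, and combined with $|V_{i,q}|=|V_{i',q}|$ this forces $V_i=V_{i'}$. But then the transposition swapping rows $i$ and $i'$, with trivial sign, fixes $V$, giving a non-trivial automorphism — contradiction. Hence $h_i^{(1)}\neq h_{i'}^{(1)}$ for all $i\neq i'$.

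The main obstacle is the middle step — showing the coordinatewise sign corrections can be chosen consistently, i.e. that one $s$ works simultaneously for all $j$. This is exactly where the hypothesis "at most one zero per eigenvector" is indispensable: on a coordinate where two or more entries of $v^{(q)}$ vanish, equation (b) no longer pins down the sign there relative to $i$ and $i'$, and the local signs cannot in general be glued into a global automorphism — indeed the $n=12$ construction used to prove incompleteness of EPNN relies on precisely such heavily-sparse eigenvectors. A secondary subtlety is that $\sigma$ need not fix $i$ a priori, so it is cleanest to phrase the argument purely in terms of (a), (b) and the automorphism hypothesis rather than tracking where $\sigma$ sends $i$ and $i'$.
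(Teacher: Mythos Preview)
Your proof is correct and follows essentially the same approach as the paper: both arguments extract from $h_i^{(1)}=h_{i'}^{(1)}$ the equalities $|V_{i,q}|=|V_{i',q}|$, use the ``at most one zero per eigenvector'' hypothesis to rule out $V_{i,q}=0$, define a sign vector via the ratios $V_{i',q}/V_{i,q}$, obtain a permutation from the multiset equality, and assemble these into a forbidden non-trivial automorphism. The only organizational difference is that the paper splits upfront into the cases $V_i=V_{i'}$ versus $V_i\neq V_{i'}$, whereas you run a single argument and recover the transposition automorphism at the end after forcing $(\sigma,s)=(\mathrm{id},\mathbf{1})$; this is a cosmetic rearrangement of the same idea.
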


\begin{proof}
    
    
    



    By contradiction, assume that there exist indices $i,j$ such that $h_{i}^{(1)} = h_{j}^{(1)}.$ By the definition of EPNN we have that 
    \begin{equation}\label{eq:eq-hid_states}
        \{ V_i \odot V_k \}_{k=1}^{n} =  \{ V_j \odot V_k \}_{k=1}^{n} \text{ and } V_i \odot V_i= V_j \odot V_j.
    \end{equation}
We deduce for the second equality that all coordinates $|V_i^{(q)}|=|V_j^{(q)}| $ for all $q=1,\ldots,K$. If for some $q$ we would have $V_i^{(q)}=0$ then also $V_j^{(q)}=0$, in contradiction to the assumption that both the $q$-th eigenvector has at most one zero entry. 





Now we consider two cases:
    \textit{Case 1. $\mathbf{\uq_{i}= V^{(q)}_{j}}$ for all $\mathbf{q=1,\ldots, n}.$ } In this case, swapping $i$ with $j$ will not change $V$, and thus there is a non-trivial automorphism, which is a contradiction.

    \textit{Case 2. $\exists q_2$ such that  $\mathbf{ V^{(q_2)}_{i}=  -V^{(q_2)}_{j}}.$ } We have from Equation \ref{eq:eq-hid_states}
that $$\{ (s_{i}^{(q)}\uq_{k})_{q=1}^{n} \}_{k=1}^{n} = \{ (s_{j}^{(q)}\uq_{k})_{q=1}^{n} \}_{k=1}^{n}
    $$\label{eq:eq-case-2} where $s_{i}^{(q)} \in \{\pm 1 \}$ and is defined as $\frac{V_i^{(q)}}{|V_i^{(q)}|}$ and $s_{j}^{(q)}$ is defined analogously. Note that
     $\uq_{i}, \uq_{j}\neq 0, \; \forall q=1,\ldots, n$, thus it is well-defined.

     From the set equality, it means that there exists a non-identity permutation $\sigma$ such that $s_{i}^{(q)}V_{k}^{(q)} = s_{j}^{(q)}V_{\sigma(k)}^{(q)} $ for all $k,q=1,\ldots, n.$
   Or equivalently,
    \begin{equation}\label{eq:sim-perm}
        PVS_1 =VS_2 \implies PVS_1S_2 =V
    \end{equation}

    where $S_1$ and $S_2$ are diagonal matrices with $s_i^{(q)}$ and $s_j^{(q)}$, respectively, on the diagonals. By the assumption, $S_1S_2$ is not the identity matrix and 
    thus there is a non-trivial automorphism, in contradiction to the assumption. Thus $h_i^{(1)}\neq h_j^{(1)} $ as  required.  
\end{proof}




\subsection{equiEPNN is strictly more expressive than EPNN}\label{seubsec:equi-more-express}
After establishing the incompleteness of EPNN, we show that equiEPNN  it is strictly more powerful, as it separates the pair $U$ and $V$ from Subsection \ref{subsec:siwl-incomp}, which EPNN cannot separate:

\begin{restatable}{corollary}{corStronger}
    equiEPNN (see Section \ref{sec:spec-equi-1-wl}) can separate $U$ and $V$ after $2$ iterations. Thus equiEPNN is strictly stronger than EPNN. 
\end{restatable}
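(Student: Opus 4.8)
The goal is to exhibit a concrete choice of UPDATE and READOUT functions for which the equiEPNN procedure produces different global features on $U$ and $V$. The key idea is that the equivariant feature $v_i^{(t)}$ carries strictly more information than the invariant feature $h_i^{(t)}$: whereas in EPNN the ``purview'' of a column is only its multiset of elementwise products with the other columns (which is identical across the group-structured blocks in both $U$ and $V$), the equivariant update lets each node broadcast an actual sign-equivariant vector, and the elementwise product $v_i^{(t)}\odot v_j^{(t)}$ appearing in the second update can mix information \emph{across blocks} in a way that is sensitive to the third block of rows, which is exactly where $U^T$ and $V^T$ differ (the entries $z_1,z_0,z_2,z_3$ versus $z_0,z_2,z_1,z_3$ in the middle group, and $z_2,z_0,z_3,z_1$ versus $z_0,z_2,z_1,z_3$ in the last).

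First I would run one iteration to see what $v_i^{(1)}$ can be made to encode. Starting from $v_i^{(0)}=V_i$ and $h_i^{(0)}=V_i\odot V_i$, the sum $\sum_j v_j^{(0)}\odot \mathrm{UPDATE}_{(0,2)}(h_i^{(0)},h_j^{(0)},V_i\odot V_j)$ can be engineered — by choosing $\mathrm{UPDATE}_{(0,2)}$ to select, via the invariant arguments, a specific subset of columns $j$ and assign them coordinatewise weights — to produce a vector $v_i^{(1)}$ whose dependence on the block structure differs between $U$ and $V$. Concretely, because $h_j^{(0)}$ together with the multiset structure lets us detect which of the three blocks $j$ lies in (the first block has zeros in coordinates $5,6$; the second in coordinates $3,4$; the third in coordinates $1,2$ — wait, more carefully: in $U^T$ the first row-group is nonzero in coordinates $1,2$, zero in $3,4$ for columns $5$--$8$, etc.), we can form, at each node $i$, a weighted equivariant aggregate that ``imports'' the third-block pattern sitting opposite to $i$'s own block. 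Then a second iteration, with $\mathrm{UPDATE}_{(1,1)}$ reading the products $v_i^{(1)}\odot v_j^{(1)}$, can detect a mismatch: the combinatorics of how the $z$-patterns in the third block are paired with those in the first two blocks is different in $U$ versus $V$ (this is precisely the obstruction used in part 1 of the Incompleteness theorem — there is no single global sign pattern consistent with all three blocks). I would make $\mathrm{UPDATE}_{(1,1)}$ injective on its multiset argument and $\mathrm{READOUT}$ injective, so that any difference in the collection of node states propagates to $h_{\mathrm{global}}$.

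The cleanest way to organize the argument is: (i) compute $v_i^{(1)}$ for a judiciously chosen $\mathrm{UPDATE}_{(0,2)}$ — I expect a choice that, for $i$ in block $b$, outputs a fixed vector supported on the coordinates where block $b$'s ``partner'' data lives, so that $v_i^{(1)}$ records the signed third-block column aligned with column $i$; (ii) observe that the assignment $i\mapsto v_i^{(1)}$, as a labeled configuration up to the global sign group $\{-1,1\}^6$ and $S_{12}$, is genuinely different for $U$ and $V$ — here I would invoke the same incompatibility-of-sign-patterns computation from the proof of the Incompleteness theorem, now applied to the enriched configuration rather than to the raw matrix; (iii) run the second invariant update with injective UPDATE/READOUT to conclude $h_{\mathrm{global}}(U)\neq h_{\mathrm{global}}(V)$. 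Strictness over EPNN is then immediate: equiEPNN reduces to EPNN when $\mathrm{UPDATE}_{(t,2)}\equiv 0$, so $\mathcal{F}_{\mathrm{equi}}$ contains $\mathcal{F}_{\mathrm{EPNN}}$, and the pair $(U,V)$ separates the two classes.

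\textbf{Main obstacle.} The delicate step is (ii): I must verify that the equivariant aggregation really does break the block symmetry, i.e. that after one equivariant step the ``purviews'' $\{v_i^{(1)}\odot v_j^{(1)}\}_j$ are no longer forced to coincide across $U$ and $V$. This requires carefully tracking which columns get which coordinatewise signs under the chosen $\mathrm{UPDATE}_{(0,2)}$, and checking that the resulting discrepancy is not itself washed out by some residual global sign symmetry. In other words, I need to confirm that the extra information injected is of a \emph{relational} (cross-block) type that the pure multiset-of-products mechanism of EPNN provably cannot see — which is plausible precisely because the counterexample's indistinguishability by EPNN came from each block being a group under multiplication, a property that the bilinear term $v_i\odot v_j$ across different blocks need not respect. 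I would budget most of the effort for pinning down the explicit $\mathrm{UPDATE}_{(0,2)}$ and doing the finite (if tedious) sign bookkeeping on the $12\times 6$ matrices.
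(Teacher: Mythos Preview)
Your high-level strategy is sound and would likely succeed, but you are planning to work much harder than necessary. The paper's proof takes a shortcut that you overlook: rather than tracking cross-block sign patterns through two iterations and doing the ``finite (if tedious) sign bookkeeping'' you anticipate, it simply chooses $\mathrm{UPDATE}_{(0,2)}$ so that one equivariant step \emph{fills in a zero entry}. Concretely, the paper picks $\mathrm{UPDATE}_{(0,2)}$ to output $(1,1,0,0,0,0)$ on the specific input triple $(V_5\odot V_5,\,V_1\odot V_1,\,V_5\odot V_1)$ and $\vec 0$ otherwise; this causes $v_5^{(1)}$ to acquire nonzero entries in its first two coordinates, so column $5$ of both $U^{(1)T}$ and $V^{(1)T}$ has no zeros. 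At that point Theorem~\ref{thm:2} (EPNN is complete whenever some row of the eigenvector matrix is nowhere zero) finishes the job in one more iteration, with no further case analysis.

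So the comparison is: you propose to use the equivariant channel to inject cross-block relational information and then argue directly, via the same incompatibility-of-sign-patterns that proves non-isomorphism, that the second-iteration invariant multisets differ; the paper instead uses the equivariant channel for the minimal purpose of \emph{densifying} a single node's feature vector and then hands off to an already-proved completeness result. Your route is conceptually closer to ``why'' equiEPNN is stronger (it can see the cross-block coupling that EPNN's multiset-of-products cannot), but the paper's route is shorter and modular, turning step~(ii) of your plan --- which you correctly flag as the main obstacle --- into a one-line invocation of Theorem~\ref{thm:2}.
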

\begin{proof}[Proof Idea]
We show that after a single iteration, the equivariant update step can yield new matrices $U^{(t)}, V^{(t)}, t=1$ which have no zeros. From Theorem \ref{thm:2}, we know that a single iteration of EPNN, and hence also equiEPNN, is complete for such $U^{(t)}, V^{(t)}$, and thus two iterations of equiEPNN are sufficient for separation. 
\end{proof}

\subsection{Incompleteness of spectral GNNs}\label{subsec:rl-wrld}

Theorem 1 proves that EPNN is incomplete on graphs with a simple spectrum. This spectral isomorphism test upper bounds the expressive power of many popular distance-based GNNs, which incorporate graph distances as edge features, such as Random Walk, PageRank, shortest path, or resistance distances \cite{zhang2023rethinking,li2020distance, ArnaizRodriguez2022, Velingker2023, zhang2023complete}. Therefore, an immediate corollary of Theorem 1 follows:

\begin{corollary}
Graphormer-GD \cite{zhang2023rethinking}, PRD-WL \cite{li2020distance}, DiffWire \cite{ArnaizRodriguez2022}, and Random-Walk based GNNs\cite{Velingker2023, zhang2023complete} are incomplete over graphs with a simple spectrum.
\end{corollary}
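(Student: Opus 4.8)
The plan is to derive the corollary directly from Theorem~1 (the incompleteness of EPNN) together with the domination statement, established in \citet{zhang2024expressive}, that EPNN is at least as expressive as each of the listed distance-based GNNs. Concretely, for each model $M$ in the list (Graphormer-GD, PRD-WL, DiffWire, Random-Walk GNNs) I would invoke the known reduction showing that the edge features $M$ uses---shortest-path distances, PageRank scores, resistance distances, or powers of the random-walk matrix---are all computable as functions of the projections onto the eigenspaces of the relevant graph operator, and hence $M$'s output factors through EPNN's output. In the simple-spectrum case these edge features are in particular functions of the multiset $\{V_i\odot V_j\}$ and of $\vec\lambda$, so any $f_M$ in model $M$'s function class can be simulated by some $f\in\mathcal{F}_{\mathrm{EPNN}}$.

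The key steps, in order: (i) recall the pair $U,V$ from Subsection~\ref{subsec:siwl-incomp}, which by Theorem~1 are non-isomorphic, have no non-trivial automorphisms, and are not separated by any EPNN; (ii) lift $U,V$ to a pair of actual graphs (or symmetric matrices) with simple spectrum whose eigendecompositions are $(U,\vec\lambda)$ and $(V,\vec\lambda)$---here one uses the orthonormalized enlarged counterexample promised in the Remark following Theorem~1, so that $A_U = U\,\mathrm{diag}(\vec\lambda)\,U^T$ and $A_V=V\,\mathrm{diag}(\vec\lambda)\,V^T$ are genuine symmetric matrices with distinct eigenvalues; (iii) for each listed model $M$, apply the domination $M \preceq \mathrm{EPNN}$: since no EPNN separates $(U,\vec\lambda)$ from $(V,\vec\lambda)$, no function in $M$'s class separates $A_U$ from $A_V$; (iv) conclude that each $M$ fails to be complete on the class of simple-spectrum graphs, since $A_U,A_V$ are a non-isomorphic pair in that class that $M$ cannot distinguish.

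I would phrase the write-up as a short paragraph rather than a displayed multi-step proof, since modulo the cited domination result the argument is a one-line composition. The one subtlety worth stating explicitly is step~(ii): Theorem~1 is stated for the abstract eigendecomposition objects $(V,\vec\lambda)$, and the corollary is about graphs, so one must make sure the counterexample can be realized by bona fide graphs (or at least symmetric matrices) within the simple-spectrum class---this is exactly what the Remark's orthonormalization handles, and I would cite it. A secondary point is that the listed GNNs operate on a graph's adjacency or Laplacian, so for each model one should name which operator's eigendecomposition is being used and note that "simple spectrum" is meant with respect to that operator; since our construction fixes $\vec\lambda$ to be an arbitrary strictly decreasing tuple, it can be chosen to match whichever operator is relevant.

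The main obstacle---really the only non-routine point---is making the reduction $M\preceq\mathrm{EPNN}$ fully precise for \emph{each} of the four model families at once, rather than invoking it as a black box: Graphormer-GD's resistance-and-shortest-path features, PageRank, and random-walk-matrix powers each require a slightly different (but standard) expression in terms of $\{V_i\odot V_j\}$ and $\vec\lambda$, and one must check that these expressions respect the sign-invariance so they indeed live inside $\mathcal{F}_{\mathrm{EPNN}}$. Since \citet{zhang2024expressive} already proves exactly this generalization claim, I would cite it and not re-derive the four reductions; the corollary then follows immediately.
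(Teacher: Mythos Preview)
Your proposal is correct and follows essentially the same approach as the paper: the paper states this corollary as an immediate consequence of Theorem~1 together with the fact (cited from \citet{zhang2024expressive}) that EPNN upper-bounds the expressive power of all the listed distance-based GNNs. If anything, you are more careful than the paper itself---your explicit step~(ii) about realizing the counterexample as bona fide symmetric matrices via the orthonormalized enlargement, and your remark about matching the operator to each model, are subtleties the paper leaves implicit.
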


In addition to this result, in the appendix we prove that the model proposed by \citet{zhou2024stablegloballyexpressivegraph} is not universal on simple spectrum graphs, contrary to their claim.
\begin{restatable}{proposition}{propMuhan}
\label{cor:oge-incom}
    Vanilla OGE-Aug \cite{zhou2024stablegloballyexpressivegraph} is incomplete over graphs with a simple spectrum.
\end{restatable}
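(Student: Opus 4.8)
The plan is to reduce the incompleteness of Vanilla OGE-Aug to the incompleteness of EPNN established in Theorem 1, by showing that the features OGE-Aug produces on a simple spectrum graph are a function of (indeed, refinable from) the EPNN features on the same eigendecomposition. Concretely, I would first recall the precise definition of Vanilla OGE-Aug from \cite{zhou2024stablegloballyexpressivegraph}: it augments node features with orthogonal-group-equivariant quantities built from the eigenvectors, and in the simple spectrum case each eigenspace is one-dimensional, so the only orthogonally-invariant polynomials in a single unit vector $v^{(q)}$ are functions of the entrywise products $V_i^{(q)} V_j^{(q)}$ — exactly the projection-matrix entries $(v^{(q)} (v^{(q)})^T)_{ij}$ that EPNN already uses as edge features in \eqref{eq:wl-update-inv}. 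The key structural claim is therefore: on any $(V,\vec\lambda)\in\domain$, the initial node/edge features of Vanilla OGE-Aug are measurable functions of $\{V_i\odot V_i\}_i$ and $\{V_i\odot V_j\}_{i,j}$, hence the entire OGE-Aug message-passing computation can be simulated by a suitable choice of $\mathrm{UPDATE}_{(t)}$ and $\mathrm{READOUT}$ in $\mathcal{F}_{\mathrm{EPNN}}$.

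Given that simulation lemma, the argument concludes quickly: by Theorem 1, EPNN assigns the explicit pair $U,V\in\domain$ (with $n=12$, $K=6$) the same global feature after any number of iterations, so Vanilla OGE-Aug also fails to separate $U$ and $V$; since part 1 of Theorem 1 says $U,V$ are not isomorphic under $\{-1,1\}^6\times S_{12}$ and part 3 says they have no nontrivial automorphisms, these correspond to genuinely non-isomorphic simple spectrum graphs, contradicting any universality claim. If the orthonormality of eigenvectors is needed to realize $U,V$ as actual eigendecompositions of symmetric matrices, I would invoke the enlarged orthonormal counterexample promised in the Remark after Theorem 1 rather than $U,V$ themselves.

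The main obstacle is the simulation lemma, and specifically matching the exact augmentation used by Vanilla OGE-Aug to EPNN's feature vocabulary. The subtlety is that \cite{zhou2024stablegloballyexpressivegraph} may build higher-degree OGE-equivariant tensors or combine eigenvectors across eigenvalues (e.g. products $v^{(q)}(v^{(r)})^T$), so I must check that, under a simple spectrum, every such object is still expressible from the diagonal and off-diagonal entrywise products above — products across distinct eigenspaces reduce to $V_i^{(q)} V_i^{(r)}$, which EPNN recovers after one round from $h_i^{(0)}=V_i\odot V_i$ together with the collected multiset, up to the sign ambiguities that are precisely the symmetries both models are invariant to. I would also need to confirm that OGE-Aug's aggregation is itself a multiset/readout operation of the type EPNN permits, so that the simulation is faithful at every layer and not merely at initialization. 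Once the vocabulary containment is verified, the rest is bookkeeping.
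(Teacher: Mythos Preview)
Your strategy—reduce OGE-Aug to EPNN via a simulation lemma and then invoke the $12\times 6$ counterexample of Theorem~1—is a genuinely different route from the paper's. The paper does not attempt any simulation: it works directly with a much smaller ($n=4$, $K=2$) pair taken from \cite{canonicalization-perspective}, namely $u_{11}=(1,-1,1,-1)^T$ versus $u_{21}=(-1,1,1,-1)^T$, both paired with the common second eigenvector $(2,3,4,5)^T$. The key observation is that Vanilla OGE-Aug applies its $O(1)$-invariant universal representation $f_1$ to each eigenvector \emph{separately}, and $u_{11}$ has nontrivial sign self-symmetries: there exist permutations $P$ with $Pu_{11}=-u_{11}$. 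Combining sign invariance with permutation equivariance gives $f_1(u_{11})=Pf_1(u_{11})$ for every such $P$; since those permutations generate a transitive subgroup of $S_4$, $f_1(u_{11})$ is forced to be a constant vector, and one more swap shows $f_1(u_{21})=f_1(u_{11})$. The augmented node features are then identical for the two inputs, so no downstream GNN can separate them. This is essentially a two-line argument once the right pair is in hand, and it pinpoints exactly \emph{why} OGE-Aug fails: per-eigenspace processing collapses on eigenvectors with self-symmetry—the phenomenon already flagged in Section~\ref{sec:rel-wrk}.

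Your reduction may well be correct—indeed EPNN \emph{does} separate the paper's $4$-node pair, which is consistent with OGE-Aug being bounded by EPNN—but the simulation lemma is harder than your sketch suggests, and you have misidentified the obstacle. OGE-Aug never forms cross-eigenspace products $v^{(q)}(v^{(r)})^T$; it processes each $v^{(q)}$ in isolation, so that worry is moot (and $V_i^{(q)}V_i^{(r)}$ is not sign-invariant anyway, so neither model can output it). The real difficulty is that $f_1(v^{(q)})_i$ is an \emph{arbitrary} permutation-equivariant, sign-invariant function of the whole vector $v^{(q)}$, not merely of node $i$'s row of the projection matrix. When $V_i^{(q)}=0$, every product $V_i^{(q)}V_j^{(q)}$ vanishes and one EPNN round tells node $i$ nothing about the sign pattern of the remaining entries; you would then need to prove that iterated $1$-WL-style refinement on the rank-$1$ matrix $v^{(q)}(v^{(q)})^T$ eventually recovers the full $S_n\times O(1)$ orbit of $(v^{(q)},i)$ at every node. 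That is believable given the rigidity of rank-$1$ structure, but it is a genuine lemma that needs proving, and it is considerably more work than the paper's direct symmetry argument.
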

 




\section{Experiments}

Our goal in the experiments section is to empirically verify the theoretical results we derived. Firstly, we statistically analyze the eigenvalue multiplicity in real-world datasets, and the number of non-zero entries in the eigenvalues. Secondly, we wish to evaluate the empirical benefit of the improved expressivity of equiEPNN \ref{sec:spec-equi-1-wl} on graphs with a simple spectrum. Finally, we wish to evaluate the utility of the equivariant features derived from Spectral Equivariant $1$-WL on the task of eigenvector canonicalization \cite{canonicalization-perspective}.

\subsection{Dataset statistics}

We surveyed popular graph datasets and documented their graph spectral properties. The results are shown in Table \ref{tab:graph-stats}.  We find that 
the MNIST Superpixel \cite{super-pixel} dataset is  almost homogeneously composed of graphs with a simple spectrum, and we find that ($96.9 \%$) of the graphs in this dataset have a full row without zeros, implying that  EPNN is complete on almost all graphs. 

Other datasets, such as MUTAG, ENZYMES, PROTEINS and ZINC \cite{Irwin2012, morris2020tudataset}, contain a substantial amount of graphs with eigenvalue multiplicity $2$ and $3$. Despite this, the number of eigenspaces of dimensions $2$ and $3$ is very few per graph, averaging at around $1$ per graph. On datasets with highly symmetric graphs, such as ENZYMES and PROTEINS, the graphs do not meet the sparsity condition of Theorem \ref{thm:2}, thus EPNN will not necessarily faithfully learn the graph structure.  This exemplifies the need for more expressive models that are complete on graphs with higher maximal eigenvalue multiplicity and sparse eigenvectors.

\begin{table}[H]
\centering
\caption{Graph Statistics Analysis Across Different Datasets (Eigenvalue Tolerance: $10^{-4}$)}
\label{tab:graph-stats}
\resizebox{\textwidth}{!}{
\begin{tabular}{lccccc}
\toprule
\textbf{Dataset Name} & \textbf{MUTAG} & \textbf{ENZYMES} & \textbf{PROTEINS} & \textbf{MNIST} & \textbf{ZINC} \\
\midrule
\multicolumn{6}{l}{\textbf{Dataset Overview}} \\
Number of Graphs & 188 & 600 & 1,113 & 60,000 & 10,000 \\
\midrule
\multicolumn{6}{l}{\textbf{Eigenvalue Characteristics}} \\
Graphs with Distinct Eigenvalues & 41.5\% (78) & 34.8\% (209) & 22.1\% (246) & 99.9\% (59,950) & 40.7\% (4,072) \\
Graphs with Multiplicity 2 Eigenvalues & 58.5\% (110) & 65.2\% (391) & 77.9\% (867) & -- & 59.3\% (5,928) \\
Graphs with Multiplicity 3 Eigenvalues & 19.1\% (36) & 46.2\% (277) & 57.9\% (644) & -- & 26.2\% (2,617) \\
Avg. Number of Multiplicity 2 Eigenvalues & 0.74 & 1.01 & 1.24 & -- & -- \\
Avg. Number of Multiplicity 3 Eigenvalues & 0.26 & 0.58 & 0.71 & -- & -- \\
\midrule
\multicolumn{6}{l}{\textbf{Eigenvector Properties}} \\
Average Ratio of Zeros & 1.67 & 4.28 & 6.39 & 0.31 & 2.52 \\
Average Number of Zeros & 31.13 & 172.93 & 817.20 & 23.16 & 61.04 \\
Graphs with a Full Row & 75.0\% (141) & 35.8\% (215) & 37.1\% (413) & 96.9\% (58,077) & 64.5\% (6,447) \\
Graphs with $\leq$1 Zero per Eigenvector & 0.0\% (0) & 6.3\% (38) & 5.0\% (56) & 20.2\% (12,085) & 4.3\% (430) \\
Graphs with Total Zeros $<$ Vertices & 29.8\% (56) & 16.3\% (98) & 14.3\% (159) & 89.9\% (53,873) & 13.0\% (1,295) \\
Graphs Meeting Any Condition & 75.0\% (141) & 35.8\% (215) & 37.1\% (413) & 96.9\% (58,077) & 64.5\% (6,447) \\
\bottomrule
\end{tabular}
}
\end{table}
\subsection{MNIST Superpixel}
As a toy experiment to examine the potential benefit of using equiEPNN, 
We implemented equiEPNN via a modification of the EGNN architecture \cite{satorras2021n} and EPNN with the same architecture, but without the eigenvector update step. For precise hyperparameter configuration, see the Appendix.
\begin{wraptable}{r}{0.5\textwidth}
  \centering
  \begin{tabular}{c|cc}
       \hline
        $k$ & EPNN & EquiEPNN \\
        \hline
        3  & 48.45 $\pm$ 1.2  \% & 60.95 $\pm$ 0.9 \% \\
        8  & 85.55 $\pm$ 2.1 \% & 83.56 $\pm$ 2.5 \% \\
        16 & 90.13 $\pm$ 2.3 \% & 91.37 $\pm$ 2.2 \% \\
        \hline
  \end{tabular}
   \caption{Ablation study on MNIST Superpixel \cite{super-pixel}. Accuracy percentage comparison with deviation over $3$ trials, for different values of $K$ for EPNN and equiEPNN. }
    \label{tab:accuracy_comparison}
\end{wraptable}
 We examined the performance of both methods on the MNIST Superpixel datasets, where the task is classification of handwritten digits. We found that equiEPNN outperforms EPNN, with the same number of model parameters and hyperparameter instantiations, in the setting with few known eigenvectors. With a sufficient number of eigenvectors EPNN and equiEPNN achieve comparable results, as expected, since they are both complete on almost all graphs in MNIST Superpixel. (see k=8 and k=16 in Table \ref{tab:accuracy_comparison}.) 

Furthermore, we evaluated equiEPNN in comparison with other leading architectures, with a comparable parameter budget of $\approx 35 $K. see Table \ref{tab:zinc12k_mnist75_results}. We observe that it outperforms PPGN \citet{maron2019provably}, which has cubic complexity, and GNNML1, which also processes the eigendecomposition of the graph. ChebNet outperforms all other methods, perhaps due to its handcrafted polynomial features.  
\begin{table}[htbp]
\centering
\caption{Results on Zinc12K and MNIST-75 datasets. The values are the MSE for Zinc12K and the accuracy for MNIST-75. Edge features are not used even if they are available in the datasets. For Zinc12K, all models use node labels. For MNIST-75, the model uses superpixel intensive values and node degree as node features. Models have a budget of 30K free parameters for Zinc and 35K for MNIST.}
\label{tab:zinc12k_mnist75_results}
\begin{tabular}{l|lcc}
\hline
\textbf{Category} & \textbf{Model} & \textbf{Zinc12K} & \textbf{MNIST-75} \\
\hline
NN & MLP & $0.5869 \pm 0.025$ & $25.10\% \pm 0.12$ \\
\hline
MPNN & GCN & $0.3322 \pm 0.010$ & $52.80\% \pm 0.31$ \\
 & GAT & $0.3977 \pm 0.007$ & $82.73\% \pm 0.21$ \\
 & GIN & $0.3044 \pm 0.010$ & $75.23\% \pm 0.41$ \\
 \hline
3-WL & PPGN & $0.1589 \pm 0.007$ & $90.04\% \pm 0.54$ \\
\hline
Spectral & ChebNet & $0.3569 \pm 0.012$ & $\mathbf{92.08\% \pm 0.22}$ \\
 & GNNML1 & $0.3140 \pm 0.015$ & $84.21\% \pm 1.75$ \\
 & equiEPNN (Ours) & $\mathbf{ 0.2805 \pm 0.019}$  & 90.32 \% $\pm$ 0.7\\

\hline
\end{tabular}
\end{table}

\subsection{Realizable expressivity}

We have tested an implementation of equiEPNN on the expressivity benchmark BREC \cite{wang2023empirical}, which contains highly regular and symmetric graphs that are difficult to separate for GNNs with low expressive power. Since equiEPNN is designed to improve the expressivity of EPNN on graphs with a simple spectrum, there is no reasonable expectation for it to outperform EPNN on BREC. We tested out equiEPNN on BREC to verify our assumption, and both models indeed attain statistically equivalent performance. See the Appendix for more details.

\subsection{ZINC}
We further evaluate equiEPNN via the standard regression task on the ZINC dataset of molecular graphs (in the manuscript, we tested eigenvector canonicalization on this same dataset). ZINC (Subset) has 12000 graphs with an average of 23.16 nodes per graph. We compare ourselves to leading methods with the standard $\approx 500K$ parameter budget and find that our method attains the best results:

Graph regression results on ZINC with a parameter budget of ~500K

\begin{table}[h]
\centering
\caption{Results on ZINC.}
\label{tab:zinc_results}
\begin{tabular}{l|c}
\hline
method & test MAE \\
\hline
GIN  & 0.526±0.051 \\
GraphSage  & 0.398±0.002 \\
GCN   & 0.384±0.007 \\
GCN   & 0.367±0.011 \\
GatedGCN-PE  & 0.214±0.006 \\
MPNN (sum)  & 0.145±0.007 \\
PNA   & 0.142±0.010 \\
GT  & 0.226±0.014 \\
SAN  & 0.139±0.006 \\
Graphormer$_{\text{SLIM}}$  & 0.122±0.006 \\
\hline
MPNN & .138 ± .006  \\
EPNN & .103 ± .006 \\
equiEPNN (Ours)  & \textbf{0.99 $\pm$ 0.001}\\
\hline
Subgraph GNN & .110 ± .007  \\
Local 2-GNN & .069 ± .001  \\
\end{tabular}
\end{table}




\subsection{Eigenvector canonicalization}\label{subsec:exp-eigvec-canon}
Positional encoding is a cornerstone of graph learning using Transformer architectures, yet they suffer from the sign ambiguity problem \cite{dwivedi2023benchmarking}. It can be resolved by eigenvector canonicalization, which involves choosing a unique representation of each eigenvector. Yet, an inherent limitation of current canonicalization methods is that they are unable to canonicalize eigenvectors with nontrivial self-symmetries, often called uncanonicalizable eigenvectors \cite{ma2023laplacian, canonicalization-perspective}.

\begin{wraptable}{r}{0.6\textwidth}
  \centering

  \caption{Uncanonicalizable Graph Eigenvectors in ZINC (Subset) \cite{Irwin2012} as percentage of total eigenvectors of eigen-space dimension 1.}
  \begin{tabular}{lc}
    \hline
    \textbf{Property} & \textbf{Percentage (\%)} \\
    \hline
    
    Sum to $0$ & 11.15 \% \\
    Uncanonicalizable &  10.93 \%\\
    equiEPNN output sum to $0$  & 0.0 \% \\
    Uncanonicalizable after equiEPNN & 0.0 \% \\
    
    \hline
  \end{tabular}
  \label{tab:graph_canonicalization}
\end{wraptable}

To overcome this limitation, we devise a method to choose a canonical representation of the original eigenvectors via the equivariant output of equiEPNN. The only requirement is that each vector in the equivariant output does not sum to $0$, which does not occur in ZINC. For more details, see the Appendix. 


We test our hypothesis on a popular benchmark ZINC \cite{Irwin2012}, and find that essentially all the vectors in the equivariant output are canonicalizable, in contrast to the vectors from the eigendecomposition, where $10 \%$ of them are uncanonicalizable. Furthermore, we devise a way to choose a canonical representation of the original eigenvectors via the equivariant output and describe this in the Appendix. The results are shown in Table \ref{tab:graph_canonicalization}.




\section{Future Work}\label{sec:conc}



A key future goal is to achieve completeness on graphs with simple spectra and higher eigenvalue multiplicities. One interesting direction is to use higher-order point cloud networks to process the eigenvectors \cite{zhou2024stablegloballyexpressivegraph}. We have shown that treating each eigenspace as a separate entity does not lead to universality (Subsection \ref{subsec:rl-wrld}). Thus, these high-order networks should process the eigenvectors as a single entity, but remain invariant only to the sign and basis symmetries.

\bibliographystyle{plainnat}
\bibliography{bibfile}

\appendix
\clearpage
\section*{Appendix}
\startcontents[appendix]
\printcontents[appendix]{}{1}{\setcounter{tocdepth}{2}}
\section{Proofs}

\subsection{Proof of Incompleteness of EPNN}
\thmtwo*
\proof
For convenience, we recall the definitions of the point clouds $U$ and $V$:

 We denoted the four elements of the abelian group $\{-1,1\}^2$, and the zero vector in $\RR^2$, by
$$z_0=\begin{pmatrix}
1\\
1	
\end{pmatrix} 
, \quad
z_1=\begin{pmatrix}
	-1\\
	1	
\end{pmatrix}
,\quad
z_2=\begin{pmatrix}
	1\\
	-1	
\end{pmatrix}
,\quad
z_3=\begin{pmatrix}
	-1\\
	-1	
\end{pmatrix}
,\quad
0_2=\begin{pmatrix}
0\\
0	
\end{pmatrix}.
$$
Using these, we define $U,V $ via

\setcounter{MaxMatrixCols}{20}
$$U^T=
\begin{pmatrix}
z_0 & z_1 & z_2 & z_3 & 0_2 & 0_2 &0_2 & 0_2 &  z_0 & z_1 & z_2 &z_3\\
z_0 & z_1 & z_2 & z_3 & z_0 & z_1 &z_2 & z_3 &  0_2 & 0_2 & 0_2 & 0_2\\
0_2 & 0_2 & 0_2 & 0_2 & z_0 & z_1 &z_3 & z_2 & z_0 & z_2 & z_1 & z_3

\end{pmatrix}
$$ 

$$V^T=
\begin{pmatrix}
z_0 & z_1 & z_2 & z_3 &0_2  & 0_2 &0_2 & 0_2 &   z_0 & z_1 & z_2 &z_3\\
z_0 & z_1 & z_2 & z_3 &z_0  & z_1 &z_2 & z_3 &   0_2 & 0_2 & 0_2 & 0_2\\
0_2 & 0_2 & 0_2 & 0_2 &z_1  & z_0 &z_2 & z_3 &  z_2 & z_0 & z_3 & z_1
\end{pmatrix}
$$

\newpage
\section*{NeurIPS Paper Checklist}

\begin{enumerate}

\item {\bf Claims}
    \item[] Question: Do the main claims made in the abstract and introduction accurately reflect the paper's contributions and scope?
    \item[] Answer: \answerYes{}
    \item[] Justification: Our claims are properly stated in the abstract within their scope. We diligently wrote the assumptions. The experiments are aligned with the claims.
    \item[] Guidelines:
    \begin{itemize}
        \item The answer NA means that the abstract and introduction do not include the claims made in the paper.
        \item The abstract and/or introduction should clearly state the claims made, including the contributions made in the paper and important assumptions and limitations. A No or NA answer to this question will not be perceived well by the reviewers. 
        \item The claims made should match theoretical and experimental results, and reflect how much the results can be expected to generalize to other settings. 
        \item It is fine to include aspirational goals as motivation as long as it is clear that these goals are not attained by the paper. 
    \end{itemize}

\item {\bf Limitations}
    \item[] Question: Does the paper discuss the limitations of the work performed by the authors?
    \item[] Answer: \answerYes{} 
    \item[] Justification: We claim that we have not fully determined when completeness on simple spectrum graphs is achieved.
    \item[] Guidelines:
    \begin{itemize}
        \item The answer NA means that the paper has no limitation while the answer No means that the paper has limitations, but those are not discussed in the paper. 
        \item The authors are encouraged to create a separate "Limitations" section in their paper.
        \item The paper should point out any strong assumptions and how robust the results are to violations of these assumptions (e.g., independence assumptions, noiseless settings, model well-specification, asymptotic approximations only holding locally). The authors should reflect on how these assumptions might be violated in practice and what the implications would be.
        \item The authors should reflect on the scope of the claims made, e.g., if the approach was only tested on a few datasets or with a few runs. In general, empirical results often depend on implicit assumptions, which should be articulated.
        \item The authors should reflect on the factors that influence the performance of the approach. For example, a facial recognition algorithm may perform poorly when image resolution is low or images are taken in low lighting. Or a speech-to-text system might not be used reliably to provide closed captions for online lectures because it fails to handle technical jargon.
        \item The authors should discuss the computational efficiency of the proposed algorithms and how they scale with dataset size.
        \item If applicable, the authors should discuss possible limitations of their approach to address problems of privacy and fairness.
        \item While the authors might fear that complete honesty about limitations might be used by reviewers as grounds for rejection, a worse outcome might be that reviewers discover limitations that aren't acknowledged in the paper. The authors should use their best judgment and recognize that individual actions in favor of transparency play an important role in developing norms that preserve the integrity of the community. Reviewers will be specifically instructed to not penalize honesty concerning limitations.
    \end{itemize}

\item {\bf Theory assumptions and proofs}
    \item[] Question: For each theoretical result, does the paper provide the full set of assumptions and a complete (and correct) proof?
    \item[] Answer: \answerYes{} 
    \item[] Justification: All assumptions are stated, we provide proof ideas and full proofs are provided in the appendix.
    \item[] Guidelines:
    \begin{itemize}
        \item The answer NA means that the paper does not include theoretical results. 
        \item All the theorems, formulas, and proofs in the paper should be numbered and cross-referenced.
        \item All assumptions should be clearly stated or referenced in the statement of any theorems.
        \item The proofs can either appear in the main paper or the supplemental material, but if they appear in the supplemental material, the authors are encouraged to provide a short proof sketch to provide intuition. 
        \item Inversely, any informal proof provided in the core of the paper should be complemented by formal proofs provided in appendix or supplemental material.
        \item Theorems and Lemmas that the proof relies upon should be properly referenced. 
    \end{itemize}

    \item {\bf Experimental result reproducibility}
    \item[] Question: Does the paper fully disclose all the information needed to reproduce the main experimental results of the paper to the extent that it affects the main claims and/or conclusions of the paper (regardless of whether the code and data are provided or not)?
    \item[] Answer: \answerYes{} 
    \item[] Justification: All code is provided in the supplementary mateial and configurations and instructions as well. 
    \item[] Guidelines:
    \begin{itemize}
        \item The answer NA means that the paper does not include experiments.
        \item If the paper includes experiments, a No answer to this question will not be perceived well by the reviewers: Making the paper reproducible is important, regardless of whether the code and data are provided or not.
        \item If the contribution is a dataset and/or model, the authors should describe the steps taken to make their results reproducible or verifiable. 
        \item Depending on the contribution, reproducibility can be accomplished in various ways. For example, if the contribution is a novel architecture, describing the architecture fully might suffice, or if the contribution is a specific model and empirical evaluation, it may be necessary to either make it possible for others to replicate the model with the same dataset, or provide access to the model. In general. releasing code and data is often one good way to accomplish this, but reproducibility can also be provided via detailed instructions for how to replicate the results, access to a hosted model (e.g., in the case of a large language model), releasing of a model checkpoint, or other means that are appropriate to the research performed.
        \item While NeurIPS does not require releasing code, the conference does require all submissions to provide some reasonable avenue for reproducibility, which may depend on the nature of the contribution. For example
        \begin{enumerate}
            \item If the contribution is primarily a new algorithm, the paper should make it clear how to reproduce that algorithm.
            \item If the contribution is primarily a new model architecture, the paper should describe the architecture clearly and fully.
            \item If the contribution is a new model (e.g., a large language model), then there should either be a way to access this model for reproducing the results or a way to reproduce the model (e.g., with an open-source dataset or instructions for how to construct the dataset).
            \item We recognize that reproducibility may be tricky in some cases, in which case authors are welcome to describe the particular way they provide for reproducibility. In the case of closed-source models, it may be that access to the model is limited in some way (e.g., to registered users), but it should be possible for other researchers to have some path to reproducing or verifying the results.
        \end{enumerate}
    \end{itemize}

\item {\bf Open access to data and code}
    \item[] Question: Does the paper provide open access to the data and code, with sufficient instructions to faithfully reproduce the main experimental results, as described in supplemental material?
    \item[] Answer: \answerYes{} 
    \item[] Justification: All code is reproducible and provided openly with instructions.
    \item[] Guidelines:
    \begin{itemize}
        \item The answer NA means that paper does not include experiments requiring code.
        \item Please see the NeurIPS code and data submission guidelines (\url{https://nips.cc/public/guides/CodeSubmissionPolicy}) for more details.
        \item While we encourage the release of code and data, we understand that this might not be possible, so “No” is an acceptable answer. Papers cannot be rejected simply for not including code, unless this is central to the contribution (e.g., for a new open-source benchmark).
        \item The instructions should contain the exact command and environment needed to run to reproduce the results. See the NeurIPS code and data submission guidelines (\url{https://nips.cc/public/guides/CodeSubmissionPolicy}) for more details.
        \item The authors should provide instructions on data access and preparation, including how to access the raw data, preprocessed data, intermediate data, and generated data, etc.
        \item The authors should provide scripts to reproduce all experimental results for the new proposed method and baselines. If only a subset of experiments are reproducible, they should state which ones are omitted from the script and why.
        \item At submission time, to preserve anonymity, the authors should release anonymized versions (if applicable).
        \item Providing as much information as possible in supplemental material (appended to the paper) is recommended, but including URLs to data and code is permitted.
    \end{itemize}

\item {\bf Experimental setting/details}
    \item[] Question: Does the paper specify all the training and test details (e.g., data splits, hyperparameters, how they were chosen, type of optimizer, etc.) necessary to understand the results?
    \item[] Answer: \answerYes{} 
    \item[] Justification: All configurations are described in supplementary material.
    \item[] Guidelines:
    \begin{itemize}
        \item The answer NA means that the paper does not include experiments.
        \item The experimental setting should be presented in the core of the paper to a level of detail that is necessary to appreciate the results and make sense of them.
        \item The full details can be provided either with the code, in appendix, or as supplemental material.
    \end{itemize}

\item {\bf Experiment statistical significance}
    \item[] Question: Does the paper report error bars suitably and correctly defined or other appropriate information about the statistical significance of the experiments?
    \item[] Answer: \answerYes{} 
    \item[] Justification: When statistical significance is clear, we mention; otherwise we claim it is only comparable.
    \item[] Guidelines:
    \begin{itemize}
        \item The answer NA means that the paper does not include experiments.
        \item The authors should answer "Yes" if the results are accompanied by error bars, confidence intervals, or statistical significance tests, at least for the experiments that support the main claims of the paper.
        \item The factors of variability that the error bars are capturing should be clearly stated (for example, train/test split, initialization, random drawing of some parameter, or overall run with given experimental conditions).
        \item The method for calculating the error bars should be explained (closed form formula, call to a library function, bootstrap, etc.)
        \item The assumptions made should be given (e.g., Normally distributed errors).
        \item It should be clear whether the error bar is the standard deviation or the standard error of the mean.
        \item It is OK to report 1-sigma error bars, but one should state it. The authors should preferably report a 2-sigma error bar than state that they have a 96\% CI, if the hypothesis of Normality of errors is not verified.
        \item For asymmetric distributions, the authors should be careful not to show in tables or figures symmetric error bars that would yield results that are out of range (e.g. negative error rates).
        \item If error bars are reported in tables or plots, The authors should explain in the text how they were calculated and reference the corresponding figures or tables in the text.
    \end{itemize}

\item {\bf Experiments compute resources}
    \item[] Question: For each experiment, does the paper provide sufficient information on the computer resources (type of compute workers, memory, time of execution) needed to reproduce the experiments?
    \item[] Answer: \answerYes{} 
    \item[] Justification: All computer resources needed are mentioned in the supplementary material.
    \item[] Guidelines:
    \begin{itemize}
        \item The answer NA means that the paper does not include experiments.
        \item The paper should indicate the type of compute workers CPU or GPU, internal cluster, or cloud provider, including relevant memory and storage.
        \item The paper should provide the amount of compute required for each of the individual experimental runs as well as estimate the total compute. 
        \item The paper should disclose whether the full research project required more compute than the experiments reported in the paper (e.g., preliminary or failed experiments that didn't make it into the paper). 
    \end{itemize}
    
\item {\bf Code of ethics}
    \item[] Question: Does the research conducted in the paper conform, in every respect, with the NeurIPS Code of Ethics \url{https://neurips.cc/public/EthicsGuidelines}?
    \item[] Answer: \answerYes{} 
    \item[] Justification: We fully abide by the NeurIPS Code of Ethics and preserve anonymity.
    \item[] Guidelines:
    \begin{itemize}
        \item The answer NA means that the authors have not reviewed the NeurIPS Code of Ethics.
        \item If the authors answer No, they should explain the special circumstances that require a deviation from the Code of Ethics.
        \item The authors should make sure to preserve anonymity (e.g., if there is a special consideration due to laws or regulations in their jurisdiction).
    \end{itemize}

\item {\bf Broader impacts}
    \item[] Question: Does the paper discuss both potential positive societal impacts and negative societal impacts of the work performed?
    \item[] Answer: \answerNA{} 
    \item[] Justification: This is theoretical research that does not affect society. 
    \item[] Guidelines:
    \begin{itemize}
        \item The answer NA means that there is no societal impact of the work performed.
        \item If the authors answer NA or No, they should explain why their work has no societal impact or why the paper does not address societal impact.
        \item Examples of negative societal impacts include potential malicious or unintended uses (e.g., disinformation, generating fake profiles, surveillance), fairness considerations (e.g., deployment of technologies that could make decisions that unfairly impact specific groups), privacy considerations, and security considerations.
        \item The conference expects that many papers will be foundational research and not tied to particular applications, let alone deployments. However, if there is a direct path to any negative applications, the authors should point it out. For example, it is legitimate to point out that an improvement in the quality of generative models could be used to generate deepfakes for disinformation. On the other hand, it is not needed to point out that a generic algorithm for optimizing neural networks could enable people to train models that generate Deepfakes faster.
        \item The authors should consider possible harms that could arise when the technology is being used as intended and functioning correctly, harms that could arise when the technology is being used as intended but gives incorrect results, and harms following from (intentional or unintentional) misuse of the technology.
        \item If there are negative societal impacts, the authors could also discuss possible mitigation strategies (e.g., gated release of models, providing defenses in addition to attacks, mechanisms for monitoring misuse, mechanisms to monitor how a system learns from feedback over time, improving the efficiency and accessibility of ML).
    \end{itemize}
    
\item {\bf Safeguards}
    \item[] Question: Does the paper describe safeguards that have been put in place for responsible release of data or models that have a high risk for misuse (e.g., pretrained language models, image generators, or scraped datasets)?
    \item[] Answer: \answerNA{} 
    \item[] Justification: There is no risk, all datasets have no risk and are widely used.
    \item[] Guidelines:
    \begin{itemize}
        \item The answer NA means that the paper poses no such risks.
        \item Released models that have a high risk for misuse or dual-use should be released with necessary safeguards to allow for controlled use of the model, for example by requiring that users adhere to usage guidelines or restrictions to access the model or implementing safety filters. 
        \item Datasets that have been scraped from the Internet could pose safety risks. The authors should describe how they avoided releasing unsafe images.
        \item We recognize that providing effective safeguards is challenging, and many papers do not require this, but we encourage authors to take this into account and make a best faith effort.
    \end{itemize}

\item {\bf Licenses for existing assets}
    \item[] Question: Are the creators or original owners of assets (e.g., code, data, models), used in the paper, properly credited and are the license and terms of use explicitly mentioned and properly respected?
    \item[] Answer: \answerYes{} 
    \item[] Justification: All the results by other researchers are cited, stated and given credit fully.
    \item[] Guidelines:
    \begin{itemize}
        \item The answer NA means that the paper does not use existing assets.
        \item The authors should cite the original paper that produced the code package or dataset.
        \item The authors should state which version of the asset is used and, if possible, include a URL.
        \item The name of the license (e.g., CC-BY 4.0) should be included for each asset.
        \item For scraped data from a particular source (e.g., website), the copyright and terms of service of that source should be provided.
        \item If assets are released, the license, copyright information, and terms of use in the package should be provided. For popular datasets, \url{paperswithcode.com/datasets} has curated licenses for some datasets. Their licensing guide can help determine the license of a dataset.
        \item For existing datasets that are re-packaged, both the original license and the license of the derived asset (if it has changed) should be provided.
        \item If this information is not available online, the authors are encouraged to reach out to the asset's creators.
    \end{itemize}

\item {\bf New assets}
    \item[] Question: Are new assets introduced in the paper well documented and is the documentation provided alongside the assets?
    \item[] Answer: \answerYes{} 
    \item[] Justification: All code is reproducible with instructions.
    \item[] Guidelines:
    \begin{itemize}
        \item The answer NA means that the paper does not release new assets.
        \item Researchers should communicate the details of the dataset/code/model as part of their submissions via structured templates. This includes details about training, license, limitations, etc. 
        \item The paper should discuss whether and how consent was obtained from people whose asset is used.
        \item At submission time, remember to anonymize your assets (if applicable). You can either create an anonymized URL or include an anonymized zip file.
    \end{itemize}

\item {\bf Crowdsourcing and research with human subjects}
    \item[] Question: For crowdsourcing experiments and research with human subjects, does the paper include the full text of instructions given to participants and screenshots, if applicable, as well as details about compensation (if any)? 
    \item[] Answer: \answerNA{} 
    \item[] Justification: No human subjects are involved in this research.
    \item[] Guidelines:
    \begin{itemize}
        \item The answer NA means that the paper does not involve crowdsourcing nor research with human subjects.
        \item Including this information in the supplemental material is fine, but if the main contribution of the paper involves human subjects, then as much detail as possible should be included in the main paper. 
        \item According to the NeurIPS Code of Ethics, workers involved in data collection, curation, or other labor should be paid at least the minimum wage in the country of the data collector. 
    \end{itemize}

\item {\bf Institutional review board (IRB) approvals or equivalent for research with human subjects}
    \item[] Question: Does the paper describe potential risks incurred by study participants, whether such risks were disclosed to the subjects, and whether Institutional Review Board (IRB) approvals (or an equivalent approval/review based on the requirements of your country or institution) were obtained?
    \item[] Answer: \answerNA{} 
    \item[] Justification: No research with human subjects.
    \item[] Guidelines:
    \begin{itemize}
        \item The answer NA means that the paper does not involve crowdsourcing nor research with human subjects.
        \item Depending on the country in which research is conducted, IRB approval (or equivalent) may be required for any human subjects research. If you obtained IRB approval, you should clearly state this in the paper. 
        \item We recognize that the procedures for this may vary significantly between institutions and locations, and we expect authors to adhere to the NeurIPS Code of Ethics and the guidelines for their institution. 
        \item For initial submissions, do not include any information that would break anonymity (if applicable), such as the institution conducting the review.
    \end{itemize}

\item {\bf Declaration of LLM usage}
    \item[] Question: Does the paper describe the usage of LLMs if it is an important, original, or non-standard component of the core methods in this research? Note that if the LLM is used only for writing, editing, or formatting purposes and does not impact the core methodology, scientific rigorousness, or originality of the research, declaration is not required.
    \item[] Answer: \answerNA{} 
    \item[] Justification: No usage of LLMs in core method.
    \item[] Guidelines:
    \begin{itemize}
        \item The answer NA means that the core method development in this research does not involve LLMs as any important, original, or non-standard components.
        \item Please refer to our LLM policy (\url{https://neurips.cc/Conferences/2025/LLM}) for what should or should not be described.
    \end{itemize}

\end{enumerate}





We first prove:
$\mathbf{2.}$  the in-separation of $U$ and $V$ by EPNN. 

Observe the purview of node $i$ of $U$ after the first refinement step of EPNN:

\begin{equation}\label{eq:pur-one-iter}
    h_i^{(1)}(U) = ( U_i \odot U_i, \{ U_i \odot U_j \mid  j\in [10] \})    
\end{equation}

We will show that point clouds can be paritioned into `blocks' such that each point in the block obtains the same hidden state. This block structure is recognized by viewing each point as group element and each block as a multiplicative group. We will then show that this multiplicative group structure allows us to prove the inseparationof EPNN.

Concretely, our proof proceeds as follows:

\begin{enumerate}
    \item The column entries of $U$ and $U$ can be partitioned into $3$ blocks : $B_1 \triangleq \{1,2,3,4\}, B_2 \triangleq \{5,6,7,8\}$, and $B_3 \triangleq \{ 9, 10, 11, 12\}$, such $h^{(1)}_i(U)$ and $h^{(1)}_j(U)$ are identical for every $i,j \in B_k$, $k=1,2,3.$

    \item It holds that $h^{(1)}_i(U) =  h^{(1)}_i(V)$ for every $i=1,2,\ldots, 12.$
    \item For any $t \in \mathbb{N}$,  $h^{(t)}_i(U) =  h^{(t)}_i(V)$ for every $i=1,2,\ldots, 12.$
\end{enumerate}

1. We first focus on $B_1$ and then extend the argument to $B_2$ and $B_3$.

Since the  elements $ U_j$ for $j=1,2,3,4$ admit a multiplicative group structure, then for every $i=1,2,3,4$, the respective entries $U_i \odot U_j$ for $j\in [4]$ are identical (closure of groups.) 

For $j = 5,6,7,8$ and $i=1,2,3,4$, the entries of the products $V_i \odot V_j$, are are zeros in two row entries and the non-zero entries in the remaining row, each element of the group $\mathbb{Z}_2^{2} \cong  \{ z_0, z_1,z_2, z_3 \}$ appears exactly once in the non-zero entries of the products, as it holds that $z_i\mathbb{Z}_2^{2} = \mathbb{Z}_2^{2}.$

Analogously, we can extend this argument to $j = 9,10,11,12$ and $i=1,2,3,4.$

This means that $h^{(1)}_i(U)$ and $h^{(1)}_j(U)$ are identical for every $i,j \in B_1$.

Since, by definition of $U$ and symmetry, each four-index quadruple $B_1 \triangleq 1,2,3,4$, $B_2 \triangleq 5,6,7,8$, and $B_3 \triangleq 9,10,11,12$ is a multiplicative group, the analysis for the hidden states of the indices in $B_1$ holds for $B_2$ and $B_3$. This concludes item $1.$

$2.$ Up to now, we proved for indices $i,j \in B_k$ for $k=1,2,3$, it holds that $h_{i}^{(1)}=h_{j}^{(1)}$. It remains to be proven that these hidden states are equivalent in both point clouds to conclude step $2.$

Since the point cloud $V^T$ is derived from $U^T$ by multiplying the columns in $B_2$ by $\mathrm{diag}(z_0,z_0,z_1)$ and the columns in $B_3$ by $\mathrm{diag}(z_0,z_0,z_2)$,  the purview (see Equation \ref{eq:pur-one-iter}) of each index is identical in both point clouds, since $z_2\cdot z_2 = z_1\cdot z_1 =z_0$ which is the identity element, thus by definition of EPNN, this modification that maps $U^T$ to $V^T$ doesn't affect the pairwise multiplications in Equation \ref{eq:pur-one-iter}.

$3.$ To prove this step, we only need to show that the hidden states remain identical within each block, since the fact that they are identical across the point clouds stems from the same justification of step $2.$

In the second update step, the arguments of Step $1$ remain identical. Still, now we have updated hidden node information, but the hidden node information is identical across nodes belonging to the same block. Therefore, the only information this refinement yields is the categorization of nodes into blocks. Yet this information is already known in the initialized hidden states,$\{ h_i^{(0)} \; i=1, \ldots, n \}$, since the zero entries of multiplication $h_i^{(0)} = V_i \odot V_i$ determine the block that $i$ belongs to. Therefore, the hidden states don't supply the network with any supplementary information other than the initialization $h_{i}^{(0)}=V_i \odot V_i$. Thus, after a second refinement step, the hidden states remain identical within each block, as they have after the first refinement step. Moreover, the corresponding hidden states of the two point clouds also remain equivalent due to the arguments in Step $2$, which remain analogous, as the hidden states after a refinement only assign each node its respective block membership, which is exactly the information given in the first update step. This argument can then be applied recursively to any number of update steps.

In conclusion, we have shown that for any $t\in \mathbb{N}$, the hidden states of both point clouds are identical (in corresponding indices), therefore after a permutation invariant readout, we obtain the same output.

We now prove 
$\mathbf{3.}$
\textit{$U$ and $V$ have no nontrivial automorphisms.} 
To show $U,V$ are not isomorphic, we note that for any pair of permutation-sign matrices taking $U$ to $V$, the first four columns of $U^T$ must be mapped the first four columns of $V^T$. The same is true for columns $5-8$ and $9-12$. Considering the first four columns, we see that any sign matrix mapping them from $U$ to $V$ will be of the form $\mathrm{diag}(z,z,z') $ for $z,z' \in \{-1,+1\}^2$. The same argument for columns $5-8$ and $9-12$ gives  sign patterns of the form  $\mathrm{diag}(z',z,z_1\cdot z) $ and  $\mathrm{diag}(z,z',z_2\cdot z) $, respectively. But there is no sign pattern satisfying these three constraints simultaneously.

 The automorphism group of this extended eigendecomposition is contained within that of $U$ and $U$, respectively, and thus is also only the trivial group.

The proof of 
$\mathbf{1.}$ which states that $U$ and $V$ are not isomorphic, 
is analogous to the proof of $\mathbf{3}$, and yields that the only sign pattern taking each point cloud to itself is $(z_0,z_0,z_0)$, which implies each point cloud has only a trivial automorphism..

\subsection{Extension to orthonormal counterexamples}
The rows of the above point clouds $U,V$ are not orthonormal. Thus they are not eigenvectors of an eigendecomposition of a symmetric matrix. We fix this misalignment via the following `orthogonalization' matrices:

Taking $\tilde{U} $ to be a concantenation of the previous $U$ and $\hat U$ defined by
$$  \hat{U}^T=
\begin{pmatrix}
2z_0 & 2z_1 & 2z_2 & 2z_3 & 0_2 & 0_2 &0_2 & 0_2 &  2z_0 & 2z_1 & 2z_2 &2z_3\\
-\frac{z_0}{2} & -\frac{z_1}{2} & -\frac{z_2}{2} & -\frac{z_3}{2} & 2z_0 & 2z_1 &2z_2 & 2z_3 &  0_2 & 0_2 & 0_2 & 0_2\\
0_2 & 0_2 & 0_2 & 0_2 & -\frac{z_0}{2} & -\frac{z_1}{2} &-\frac{z_3}{2} & -\frac{z_2}{2} & -\frac{z_0}{2} & -\frac{z_2}{2} & -\frac{z_1}{2} & -\frac{z_3}{2}
\end{pmatrix}
$$ 

Then take 
$\tilde{V} $ to be a concantenation of the previous $V$ and $\hat V$ defined by
$$\hat{V}^T=
\begin{pmatrix}
2z_0 & 2z_1 & 2z_2 & 2z_3 & 0_2 & 0_2 &0_2 & 0_2 &  2z_0 & 2z_1 & 2z_2 &2z_3\\
-\frac{z_0}{2} & -\frac{z_1}{2} & -\frac{z_2}{2} & -\frac{z_3}{2} & 2z_0 & 2z_1 &2z_2 & 2z_3 &  0_2 & 0_2 & 0_2 & 0_2\\
0_2 & 0_2 & 0_2 & 0_2 & -\frac{z_1}{2} & -\frac{z_0}{2} &-\frac{z_2}{2} & -\frac{z_3}{2} & -\frac{z_2}{2} & -\frac{z_0}{2} & -\frac{z_3}{2} & -\frac{z_1}{2}
\end{pmatrix}
$$ 
The columns of $
\tilde U$ and $\tilde V$ are now orthogonal, and they can be made to have unit norm by normalizing each column. As these extensions exhibit the same symmetries of $U$ and $V$, respectively, analogous arguments to the proof of inseparation of $U$ and $V$ by EPNN (Theorem \ref{thm:2}) will apply to this new pair $\tilde{U},\tilde{V}$. Therefore, EPNN cannot distinguish $
\tilde U$ and $\tilde V$.

\subsection{Proofs for implications for real-world GNNs}

\propMuhan*

\begin{proof}
    The method proposed by \citet{zhou2024stablegloballyexpressivegraph} consists of a permutation equivariant and orthogonal invariant function. We will show that a counterexample by \cite{canonicalization-perspective} also applies to this network.

    Vanilla PGE-Aug relies on a permutation equivariant and orthogonal invariant set encoding to process each eigenspace separate. Lets revisit their separately definitions and theorem:

\begin{definition}[O(p)-invariant universal representation \cite{zhou2024stablegloballyexpressivegraph}]\label{def:wrong}
Let $f : \bigcup_{n=0}^{\infty} \mathbb{R}^{n \times p} \to \bigcup_{n=0}^{\infty} \mathbb{R}^n$. Given an input $V \in \mathbb{R}^{n \times p}$, $f$ outputs a vector $f(V) \in \mathbb{R}^n$. The function $f$ is said to be an $O(p)$-invariant universal representation if given $V, V' \in \mathbb{R}^{n \times p}$ and $P \in S_n$, the following two conditions are equivalent: 
\begin{enumerate}
    \item[(i)] $f(V) = Pf(V')$; 
    \item[(ii)] $\exists Q \in O(p)$, such that $V = PV'Q$.
\end{enumerate}
\end{definition}

\begin{definition}[Universal set representation \cite{zhou2024stablegloballyexpressivegraph}]
Let $X$ be a non-empty set. A function $f : 2^X \to \mathbb{R}$ is said to be a universal set representation if $\forall X_1, X_2 \in 2^X$, $f(X_1) = f(X_2)$ if and only if the two sets $X_1$ and $X_2$ are equal.
\end{definition}

\textbf{Proposition 3.5} (\citet{zhou2024stablegloballyexpressivegraph})\label{prop:universal-gnn}
For each $p = 1, 2, \ldots$, let $f_p$ be an $O(p)$-invariant universal representation function. Further let $g : 2^{\mathbb{R}^3} \to \mathbb{R}$ be a universal set representation. Then the following function
\begin{equation}
r(G, X_G) = \text{GNN}\left(A_G, \text{concat}\left[X_G, g\left(\{\text{concat}[\mu_j \mathbf{1}_n, \lambda_j \mathbf{1}_n, f_{\mu_j}(V_j)]\}_{j=1}^K\right)\right]\right)
\end{equation}
is a universal representation. Here $n = |V(G)|$, $((\lambda_1, \mu_1), \ldots, (\lambda_K, \mu_K))$ is the spectrum of $G$, and $V_j \in \mathbb{R}^{n \times \mu_j}$ are the $\mu_j$ mutually orthogonal normalized eigenvectors of $L_G$ corresponding to $\lambda_j$. We denote $\mathbf{1}_n$ an all-1 vector of shape $n \times 1$. GNN is a maximally expressive MPNN.

Then \citet{zhou2024stablegloballyexpressivegraph} propose the following graph neural network:

\textbf{Definition 3.6} (Vanilla OGE-Aug). Let $f_p$ be an $O(p)$-invariant universal representation, for each $p = 1, 2, \ldots$, and $g : 2^{\mathbb{R}^3} \to \mathbb{R}$ be a universal set representation. Define $Z : \mathcal{G} \to \bigcup_{n=1}^\infty \mathbb{R}^n$ as
\begin{equation}
Z(G) = g\left(\left\{\text{concat}\left[\mu_j \mathbf{1}_{|V(G)|}, \lambda_j \mathbf{1}_{|V(G)|}, f_{\mu_j}(V_j)\right]\right\}_{j=1}^K\right), \tag{5}
\end{equation}
in which the notations follow Proposition 3.5. For $G \in \mathcal{G}$, $Z(G)$ is called a \textbf{vanilla orthogonal group equivariant augmentation}, or \textbf{Vanilla OGE-Aug} on $G$.

We will show that architectures of the form of Proposition 3.5 and specifically Vanilla OGE-Aug are incomplete on simple spectrum graphs, contradicting the claim in Proposition 3.5 such a representation is universal.

Consider the point clouds proposed by \citet{canonicalization-perspective}:
\begin{quote}

\begin{align}
U_1 = [u_{11}, u_{12}] = \begin{pmatrix}
1 & -1 & 1 & -1 \\
2 & 3 & 4 & 5
\end{pmatrix}^{\top}, \\
U_2 = [u_{21}, u_{22}] = \begin{pmatrix}
-1 & 1 & 1 & -1 \\
2 & 3 & 4 & 5
\end{pmatrix}^{\top}.
\end{align}

Suppose the first column eigenvector of $U_1$ and $U_2$ corresponds to eigenvalue $\lambda_1 = 1$, the second column eigenvector of $U_1$ and $U_2$ corresponds to eigenvalue $\lambda_2 = 2$, and other eigenvectors not shown corresponds to eigenvalue $0$ (so we safely ignore them). Then the Laplacian matrices corresponding to $U_1$ and $U_2$ are:

\begin{align}
L_1 &= \lambda_1 u_{11} u_{11}^{\top} + \lambda_2 u_{12} u_{12}^{\top} = \begin{pmatrix}
9 & 11 & 17 & 19 \\
11 & 19 & 23 & 31 \\
17 & 23 & 33 & 39 \\
19 & 31 & 39 & 51
\end{pmatrix}, \\
L_2 &= \lambda_1 u_{21} u_{21}^{\top} + \lambda_2 u_{22} u_{22}^{\top} = \begin{pmatrix}
9 & 11 & 15 & 21 \\
11 & 19 & 25 & 29 \\
15 & 25 & 33 & 39 \\
21 & 29 & 39 & 51
\end{pmatrix}.
\end{align}
\end{quote}

We will now demonstrate the model in Proposition 3.5 will be unable to distinguish $U_1$ and $U_@$, regardless of the choice of the GNN.

First, consider an arbitrary $O(1)-$invariant representation $f : \mathbb{R}^n \to \mathbb{R}^n$. We will show that $f(U_1)$ and $f(U_2)$ are identical.

By the permutation equivariance and $O(1)$ invariance:

\begin{equation}\label{eq:perm-equi}
    f(u_{11}) = f(-u_{11})=f(P_{11}u_{11}) = P_{11}f(u_{11})
\end{equation}

where $P_{11}$ is any permutation that satisfies $P_{11}u_{11}=-u_{11}$. Therefore $P_{11}$ can be chosen to be $\sigma_1 \triangleq \left( 1 \; 2\right)\left(3\; 4 \right)$ or $\sigma_2 \triangleq \left(1\; 4 \right)\left( 2 \; 3 \right).$

By Equation \ref{eq:perm-equi}, and since equality is a transitive relation, it holds that $f(u_{11})(i) =f(u_{11})(j) $ for any $i$ and $j$ in the same orbit under the group $<\sigma_1, \sigma_2>,$ the group generated by $\sigma_1$ and $\sigma_2.$ It is easy to check any pair $(i, j) \in \{ 1,2,3,4\}^{2}$ can be transposed under a group element in the generated group. Therefore, $f(u_{11})$ is a constant function. Analogous arguments yield $f(u_{21})$ is also constant.

Note that for $P_{12} \triangleq (1\; 2)(3\; 4),$ it holds that $$f(u_{11})\underbrace{=}_{f(u_{11}) \text{ is constant}} Pf(u_{11}) \underbrace{=}_{\text{perm. equivariance}} f(Pu_{11}) = f(u_{21})$$

Therefore, $f(u_{11})=f(u_{21}).$ Moreover, the second eigenvectors, $u_{12}$ and $u_{22}$ of $U_1$ and $U_2$, respectively, are identical therefore clearly $f(u_{12})=f(u_{22})$.

This analysis naturally extends to a proper eigendecomposition (orthonormal eigenvectors of a graph as proposed by \citet{canonicalization-perspective} in the proof of their Corollary 3.5 \cite{canonicalization-perspective}.

Therefore, as any universal, invariant set representation is the same on both $U_1$ and $U_2$, the input to the network will be identical per its definition, and thus for their corresponding graphs $G_1$ and $G_2$ and identical node features $X_{G_1}$ and $X_{G_2}$, respectively it holds that

$$r(G_1, X_{G_1}) = r(G_2, X_{G_2})$$

yet $G_1$ and $G_2$ are non-isomorphic, thus Vanilla OGE-Aug is incomplete.

\end{proof}
\subsection{Proof for equiEPNN strictly more expressive}

\corStronger*
\begin{proof}
We show that after a single iteration, the equivariant update step can yield new matrices $U^{(t)}, V^{(t)}, t=1$ which have no zeros.
We can choose the update function $\mathrm{UPDATE}_{(1,2)}$ such that $\mathrm{UPDATE}_{(1,2)}(v_5 \odot v_5, v_1 \odot v_1, v_5 \odot v_1 ) \triangleq (1,1,0,0,0,0, 0)$ and for all other values we define it as $\Vec{0}.$

After a single iteration $U^{(1)}$ and $V^{(1)}$ will be 

\setcounter{MaxMatrixCols}{20}
$$U^{(1)T}=
\begin{pmatrix}
z_0 & z_1 & z_2 & z_3 & z_0 & 0_2 &0_2 & 0_2 &  z_0 & z_1 & z_2 &z_3\\
z_0 & z_1 & z_2 & z_3 & z_0 & z_1 &z_2 & z_3 &  0_2 & 0_2 & 0_2 & 0_2\\
0_2 & 0_2 & 0_2 & 0_2 & z_0 & z_1 &z_3 & z_2 & z_0 & z_2 & z_1 & z_3

\end{pmatrix}
$$ 

$$V^{(1)T}=
\begin{pmatrix}
z_0 & z_1 & z_2 & z_3 & z_0  & 0_2 &0_2 & 0_2 &   z_0 & z_1 & z_2 &z_3\\
z_0 & z_1 & z_2 & z_3 &z_0  & z_1 &z_2 & z_3 &   0_2 & 0_2 & 0_2 & 0_2\\
0_2 & 0_2 & 0_2 & 0_2 &z_1  & z_0 &z_2 & z_3 &  z_2 & z_0 & z_3 & z_1
\end{pmatrix}
$$ 

Since there exists a column (the fifth column) such that all its entries are non-zero in both $U^{(1)T}$ and $V^{(1)T}$, from Theorem \ref{thm:2}, we know that a single iteration of EPNN, and hence also of equiEPNN, can separate $U^{(1)T}, V^{(1)T}$. In conclusion, two iterations of equiEPNN are sufficient for separation. 
\end{proof}
\section{Experiments}

\subsection{Dataset statistics}

We surveyed the graph spectra of popular datasets to verify the need for more expressive architectures based on graph properties. We now further specify the meaning of each row in Table \ref{tab:graph-stats} in Table \ref{tab:exp-tab-one}.

\begin{table}[htbp]\label{tab:exp-tab-one}
\centering
\begin{tabular}{|p{0.45\textwidth}|p{0.45\textwidth}|}
\hline
\multicolumn{2}{|c|}{\textbf{Eigenvalue Characteristics}} \\
\hline
Graphs with Distinct Eigenvalues & Graphs where all eigenvalues have multiplicity 1, meaning each eigenvalue appears exactly once in the spectrum \\
\hline
Graphs with Multiplicity 2 Eigenvalues & Graphs that have at least one eigenvalue that appears exactly twice in the spectrum \\
\hline
Graphs with Multiplicity 3 Eigenvalues & Graphs that have at least one eigenvalue that appears exactly three times in the spectrum \\
\hline
Avg. Number of Multiplicity 2 Eigenvalues & The average number of eigenvalues (across all analyzed graphs) that have multiplicity exactly 2 \\
\hline
Avg. Number of Multiplicity 3 Eigenvalues & The average number of eigenvalues (across all analyzed graphs) that have multiplicity exactly 3 \\
\hline
\multicolumn{2}{|c|}{\textbf{Eigenvector Properties}} \\
\hline
Average Ratio of Zeros & The average proportion of zero entries found in the eigenvectors across all analyzed graphs \\
\hline
Average Number of Zeros & The average count of zero entries in the eigenvectors across all analyzed graphs \\
\hline
Graphs with a Full Row & Graphs that have at least one eigenvector with no zero entries (i.e., a "full row" in the eigenvector matrix) \\
\hline
Graphs with $\leq 1$ Zero per Eigenvector & Graphs where each eigenvector has at most one zero entry \\
\hline
Graphs with Total Zeros $<$ Vertices & Graphs where the total number of zero entries across all eigenvectors is less than the number of vertices in the graph \\
\hline
Graphs Meeting Any Condition & Graphs that satisfy at least one of the specified eigenvector properties listed above \\
\hline
\end{tabular}
\caption{Explanation of Surveyed Graph Spectral Properties}
\label{tab:eigen_properties}
\end{table}
\subsection{MNIST Superpixel}

\begin{table}[H]
\centering
\caption{MNIST Superpixel Experiment Configuration}
\label{tab:mnist_experiment_config}
\begin{tabular}{lcc}
\toprule
\textbf{Parameter} & \textbf{Default Value} & \textbf{Description} \\
\midrule
k\_values & [3, 8, 16] & List of k values for positional encoding dimensions \\
epochs & 30 & Number of training epochs \\
batch\_size & 32 & Training batch size \\
data\_dir & 'data' & Data directory path \\
device & 'cuda' & Computing device (CUDA if available) \\
early\_stopping & 10 & Early stopping patience \\
output\_dir & 'results' & Output directory for results \\
coord\_update\_options & [True, False] & Coordinate update configurations \\
random\_seed & 42 & Random seed for reproducibility \\
\bottomrule
\end{tabular}
\end{table}

\begin{table}[H]
\centering
\caption{MNIST Superpixel Network Hyperparameters}
\label{tab:mnist_network_hyperparams}
\begin{tabular}{lcc}
\toprule
\textbf{Parameter} & \textbf{Default Value} & \textbf{Description} \\
\midrule
num\_features & 1 & Input node features (MNIST characteristic) \\
num\_classes & 10 & Output classes (MNIST digits 0-9) \\
hidden\_dim & 64 & Hidden layer dimension \\
num\_layers & 3 & Number of EGNN layers \\
pos\_enc\_dim & k & Positional encoding dimension (varies: 3, 8, 16) \\
dropout & 0.2 & Dropout rate \\
lr & 0.0005 & Learning rate \\
weight\_decay & 1e-5 & Weight decay for regularization \\
norm\_features & True & Normalize node features \\
norm\_coords & True & Normalize coordinates \\
coord\_weights\_clamp & 1.0 & Clamping value for coordinate weights \\
with\_pos\_enc & True & Use positional encoding \\
with\_proj & False & Use edge projectors \\
with\_virtual\_node & False & Use virtual node \\
update\_coords & True/False & Coordinate update flag (both tested) \\
\bottomrule
\end{tabular}
\end{table}

\begin{table}[H]\label{tab:mnist-train}
\centering
\caption{MNIST Superpixel Training Configuration}
\label{tab:mnist_training_config}
\begin{tabular}{lcc}
\toprule
\textbf{Parameter} & \textbf{Value} & \textbf{Description} \\
\midrule
Optimizer & Adam & Optimization algorithm \\
Loss Function & NLL Loss & Negative log-likelihood loss \\
Scheduler & ReduceLROnPlateau & Learning rate scheduler \\
LR Reduction Factor & 0.5 & Factor for LR reduction \\
LR Patience & 5 & Scheduler patience \\
Min LR & 1e-6 & Minimum learning rate \\
Gradient Clipping & 1.0 & Maximum gradient norm \\
Early Stopping Patience & 10 & Training patience \\
\bottomrule
\end{tabular}
\end{table}
In our first experiment, we applied the proposed method on a classical task of handwritten digit classification in the MNIST dataset~\cite{lecun1998mnist}. While almost trivial by today's standards, we use this example to verify the theoretical claims regarding expressivity on simple spectrum graphs. Our experimental setup employed both EPNN (coordinate updates disabled) and equiEPNN (coordinate updates enabled) as our models exclusively on the superpixel-based graph representation from the MNISTSuperpixels dataset. In this approach, each 28 $\times$ 28 image was converted into a graph where vertices correspond to superpixels and edges represent their spatial adjacency relations, each image was represented as a different graph. We tested our models with different positional encoding dimensions of $k = 3, 8, 16$ to evaluate performance across varying levels of spectral information.

For details configutions see Tanbes \ref{tab:mnist_experiment_config}, \ref{tab:mnist_network_hyperparams}, and \ref{tab:mnist_training_config}.

\subsection{Realiable Expressivity}

The BREC \cite{wang2023empirical} dataset is a graph expressivity benchmark consisting of highly symmetric graphs that high-order GNNs struggle at distinguishing, which was used by \cite{zhang2024expressive} to check the expressivity of EPNN. We implemented EPNN and equiEPNN via the popular EGNN \cite{satorras2021n} framework and obtained statistically identical results shown in Table \ref{tab:epnn_brec}.

\begin{table}[H] 

\centering
\caption{Empirical performance of different GNNs on BREC (in percentages.) (Using k=3 spectral features, results  of non-EPNN models from \cite{zhang2024expressive})}
\label{tab:epnn_brec}
\begin{tabular}{ccccccc}
\hline
Model & WL class & Basic & Reg & Ext & CFI & Total \\
\hline
Graphormer & SPD-WL & 26.7 & 10.0 & 41.0 & 10.0 & 19.8 \\
NGNN & SWL & 98.3 & 34.3 & 59.0 & 0 & 41.5 \\
ESAN & GSWL & 96.7 & 34.3 & 100.0 & 15.0 & 55.2 \\
PPGN & 3-WL & 100.0 & 35.7 & 100.0 & 23.0 & 58.2 \\
\hline
EPNN & EPWL & 100.0 & 35.7 & 100.0 & 4.0 & 53.5 \\

Equi-EPNN & N/A  & 100.0 & 35.7 & 100.0 & 4.0 &  53.5 \\
\hline
\end{tabular}
\end{table}

\subsection{Eigenvector Canonicalization}

We specify the problem setup for eigenvector canonicalization and our proposed method.

\begin{definition}[Eigenvecor Canonicalization]
    A \textit{canonicalization} of an eigenvector $v\in \RR^{n}$ is a map $
        \phi : \RR^{n} \to \RR^{n}$ such that for every $s\in O(1) \simeq \{-1,1\},$ it holds that $\phi(sv)= \phi(v)$ and is permutation equivariant, that is for every permutation $\sigma$, $\phi(\sigma v)=\sigma\phi(v).$

\end{definition}

We now define the following eigenvector canonicalization map via the steps

\begin{enumerate}
    \item For given eigenvectors $V\in \RR^{n\times k}$ corresponding to distinct eigenvalues, we run equiEPNN for $T$ iterations, to obtain the equivariant output $V^{(T)}$
    \item We sum over the columns to obtain a matrix $S=\mathrm{diag}(s_1,s_2,\ldots,s_k)$ where $s_i \triangleq \mathrm{sign}(\sum_{j=1}^{n}V^{(T)}(i,j)) \in \{-1,+1\}$.
    \item Canonicalize the eigenvectors via $SV.$
\end{enumerate}

This defines an eigenvector canonicalization map $\psi : \RR^{n \times k} \to \RR^{n \times k}$ where $\psi(V) = SV$ for the $S(V)$ defined above. This map is naturally permutation equivariant, and it is easy to check that it is sign invariant. 

As this maps canonicalized the original eigenvectors via aggregating global graph information that depends on the entire graph eigendecomposition and not each eigenvector separately, we obtain a map that practically achieves perfect canonicalization on ZINC \cite{Irwin2012}.

See Tables \ref{tab:network_hyperparams} and \ref{tab:experiment_config} for experiment configurations.
\begin{table}[H]
\centering
\caption{Eigenvector Canonicalization Configuration}
\label{tab:experiment_config}
\begin{tabular}{lcc}
\toprule
\textbf{Parameter} & \textbf{Default Value} & \textbf{Description} \\
\midrule
subset\_size & 100 & Number of ZINC graphs to test \\
k\_projectors & 10 & Number of top eigenvalue projectors to use \\
num\_workers & 4 & Number of workers for data loading \\
device & CUDA/CPU & Computing device (CUDA if available) \\
precision & float64 & Default tensor precision \\
\bottomrule
\end{tabular}
\end{table}

\begin{table}[H]
\centering
\caption{Network Hyperparameters for Eigenvector canonicalization}
\label{tab:network_hyperparams}
\begin{tabular}{lcc}
\toprule
\textbf{Parameter} & \textbf{Default Value} & \textbf{Description} \\
\midrule
num\_layers & 5 & Number of message passing layers \\
emb\_dim & 128 & Embedding dimension \\
in\_dim & 128 & Input feature dimension \\
proj\_dim & 10 & Projection dimension \\
coords\_weight & 3.0 & Coordinate update weight \\
activation & relu & Activation function \\
norm & layer & Normalization type \\
aggr & sum & Aggregation function \\
residual & False & Use residual connections \\
edge\_attr\_dim & 20 & Edge feature dimension (2 × k\_projectors) \\
\bottomrule
\end{tabular}
\end{table}

\end{document}